\newtheorem{thm}{Theorem}
\newtheorem{lem}{Lemma}
\newtheorem{coro}{Corollary}
\newtheorem{defn}{Definition}
\DeclarePairedDelimiter\floor{\lfloor}{\rfloor}
\DeclareMathOperator*{\argmax}{arg\,max}
\newcommand{\norm}[1]{\left\lVert#1\right\rVert}
\newcommand{\RNum}[1]{\uppercase\expandafter{\romannumeral #1\relax}}
\begin{document}
	\title{Subspace Clustering through Sub-Clusters}
	
	\author{\name Weiwei Li \email WEIWEILI@LIVE.UNC.EDU \\
		\addr Department of Statistics and Operations Research\\
		University of North Carolina at Chapel Hill\\
		Chapel Hill, NC 27514, USA \\
		\\       
		\name Jan Hannig \email JAN.HANNIG@UNC.EDU \\
		\addr Department of Statistics and Operations Research\\
		University of North Carolina at Chapel Hill\\
		Chapel Hill, NC 27514, USA \\
		\\
		\name Sayan Mukherjee \email SAYAN@STAT.DUKE.EDU \\
		\addr Department of Statistical Science \\
		Mathematics, Computer Science, Biostatistics \& Bioinformatics \\
		Duke University\\
		Durham, NC 27708, USA 
	}
	\maketitle
	
	\begin{abstract}
		The problem of dimension reduction is of increasing importance in modern data analysis. In this paper, we consider modeling the collection of points in a high dimensional space as a union of low dimensional subspaces. In particular we propose a  highly scalable sampling based algorithm that clusters the entire data via first spectral clustering of a small random sample followed by classifying or labeling the remaining out-of-sample points. The key idea is that this random subset borrows information across the entire dataset and that the problem of clustering points can be replaced with the more efficient problem of ``clustering sub-clusters". We provide theoretical guarantees for our procedure. The numerical results 
		indicate that for large datasets the proposed algorithm outperforms other state-of-the-art subspace clustering algorithms with respect to accuracy and speed.
	\end{abstract}

	\begin{keywords}
		dimension reduction, subspace clustering, sub-cluster, random sampling, scalability, handwritten digits, spectral clustering  
	\end{keywords}

	\newpage
	
	\section{Introduction}
	In data analysis, researchers are often given datasets with large volume and high dimensionality. To reduce the computational complexity arising in these settings, researchers resort to dimension reduction techniques. To this end, traditional methods like PCA \cite{hotelling1933analysis} use few principal components to represent the original dataset; factor analysis \citep{cattell1952factor} seeks to get linear combinations of latent factors; subsequent works of PCA include kernel PCA \citep{scholkopf1998nonlinear}, generalized PCA \citep{vidal2005generalized}; manifold learning \citep{belkin2003laplacian} assumes data points collected from a high dimensional ambient space lie around a low dimensional manifold, and muli-manifold learning \citep{liu2011mixture} considers the setting of a mixture of manifolds. In this paper, we focus on one of the simplest manifold, a subspace,
	and consider the subspace clustering problem. Specifically, we approximate the original dataset as an union of subspaces. Representing the data as a union of subspaces allows for 
	more computationally efficient downstream analysis on various problems such as motion segmentation \citep{Vidal09}, handwritten digits recognition \citep{you2016divide}, and image compression \citep{hong2006multiscale}.
	
	\subsection{Related Work}
	Many techniques have been developed for subspace clustering, see \cite{elhamifar2013sparse} for a review. The mainstream methods usually include two phases: $(1)$ calculating the affinity matrix; $(2)$ applying spectral clustering \citep{ng2002spectral} to the affinity matrix to compute a label for each data point. For phase $(1)$, the property of self-representation is often used to calculate the affinity matrix: self-representation states that a point can be represented by a linear combination of other points in the same subspace. Specifically,  \cite{Vidal09} proposed the sparse subspace clustering (SSC) algorithm which solves the lasso minimization problem $N$ times, where $N$ is the total number of data points. Similarly, \cite{rahmani2017subspace} proposed the direction search algorithm (DSC) which uses $\ell_1$ minimization to find the ``optimal direction'' for each data point, these directions are then used to cluster the data points. One of the main drawbacks of SSC and DSC is their computational complexity of $O(N^2)$ in both time and space, which limits its application to  large datasets. To address this limitation, a variety of methods have been proposed to avoid solving complicated optimization problems in constructing the affinity matrix. \cite{heckel2015robust} used inner products with thresholding (TSC) to calculate the affinity between each pair of points, \cite{park2014greedy} used a greedy algorithm to find for each point the linear space spanned by its neighbors, similarly \cite{dyer2013greedy} and \cite{you2016scalable} used orthogonal matching pursuit (OMP), \cite{you2016oracle} used elastic the net for subspace clustering (ENSC) and proposed an efficient solver by active set method. However, these methods require running spectral clustering on the full  $N \times N$ affinity matrix. A Bayesian mixture model was proposed for subspace clustering in \cite{thomas2014learning}, but its parameter inference is not scalable to large dataset. \cite{zhou2018deep} used a deep learning based method which does not have theoretical guarantee. 
	
	Recently, there have been two methods that increase the scalability of sparse subspace clustering. The SSSC algorithm and its varieties \citep{Peng2015:SRSC_full} clusters a random subset of the whole dataset and then uses this clustering to classify or label the out-of-sample data points. This method scales well when the random subset is small, however a great deal of information is discarded as only the information in the subset is used. In  \cite{you2016divide} a divide and conquer strategy is used for SSC---the dataset is split into  several small subsets on which SSC is run, and clustering results are merged. This method cannot reduce the computational complexity of the SSC by an order of magnitude so is limited in its ability to scale to large dataset.

	\subsection{Contribution}
	In this paper, we propose a novel, efficient sampling based algorithm with provable guarantees that extends the ideas in previous scalable methods 
	\citep{Peng2015:SRSC_full,you2016divide}. The motivation for using sampling based algorithm is twofold. From the theoretical perspective, \cite{luxburg2005limits} showed that under certain assumptions, the spectral clustering results on the sampled subset will converge to the results on the whole dataset. This gives us the insight that as the size of the sampled subset increases properly, the subset becomes almost as informative as the whole dataset. From the computational perspective, traditional spectral clustering based algorithms need to build a ``neighborhood'' for each data point. Thus the complexity (both in time and memory) is usually at least quadratic in the total number of data points, while sampling based algorithms need to find neighboring points only within the subset. This greatly reduces the computational resources needed incurring some loss of information. 
	
	Our algorithm seeks to combine strengths from both approaches. In particular, for each point in the subset we find its nearest neighbors in the complete dataset and use these points to construct a sub-cluster, these sub-clusters contain information from the entire dataset and not just the random sample. Finding neighboring points among the whole dataset makes it possible to get a neighborhood with big enough size and few false connections for each sampled point. 
	The affinity matrix for the subset is then constructed from these sub-clusters. The idea is that we change the problem from ``clustering of data points'' to ``clustering of sub-clusters'', which integrates information across the dataset and should deliver better clustering results. 
	
	We provide theoretical guarantees for our procedure in Section~\ref{sec:theory}. The analysis reveals that under mild conditions, the subspaces can share arbitrarily many intersections as long as most of their principal angles are larger than a certain threshold. While our algorithm for finding neighboring points is similar to that of \cite{heckel2015robust}, the data generation model and assumptions underlying our theorems are different---we take into account the fact that after normalization the noisy terms will no longer follow a multivariate Gaussian distribution. While our work is originally designed for linear subspace clustering problems. The idea of clustering through sub-clusters can be easily extended to general clustering problems. 
	
	Finally, we study empirical properties of the proposed algorithm on both  synthetic and real-world datasets selected to have diverse sizes. We show that the clustering through sub-clusters algorithm is highly scalable and can significantly boost the clustering accuracy on both the subset and whole dataset.
	The advantage of our algorithm over other state-of-the-art algorithms changes from marginal to significant as the size of the dataset increases.
	
	\subsection{Paper Organization}
	The rest of this paper is organized as follows: in Section~\ref{sec:algo:algo}, we describe the implementation of our clustering procedure, in Section~\ref{sec:theory} we state the model setting and theoretical guarantees for our procedure and explain in some details the geometric and distributional intuitions underlying our procedure. The detailed proofs can be found in Appendix~\ref{sub:sec:prorof-thms}. In Section~\ref{sub:sec:num} we present numerical experiments and compare our method with other state-of-the-art methods, a comprehensive report of the numerical results can be found in Appendix~\ref{sub:sec:more-numerical}.

	\subsection{Notation}
	Unless specified otherwise, we use capital bold letter to denote data matrix, and corresponding lower bold letter to denote the columns of it. In this paper, we are given a dataset $\mathbf{Y}$ with $N$ data points in $\mathbb{R}^D$. We use both $\mathbf{y}_i$  and $[\mathbf{Y}]_i$ to denoted the $i$-th column of $\mathbf{Y}$, and $\mathbf{Y}_{-i}$ is the matrix $\mathbf{Y}$ with the $i$-th column removed. Similarly, we write $\mathbf{y}_{-j}$ as vector $\mathbf{y}$ with the $j$-th entry removed. The $ij$-th entry of a matrix $\mathbf{Y}$ is denoted as $[\mathbf{Y}]_{ij}$. The complement of event $\mathcal{E}$ is denoted by $\mathcal{E}^\complement$. The cardinality of $\mathcal{E}$ is denoted by $\mathbf{card}(\mathcal{E})$, and the mode of $\mathcal{E}$ is $\mathbf{mode}(\mathcal{E})$. We use subscript with parenthesis to represent the order statistics of entries in a vector, for example $\mathbf{a}_{(i)}$ is the $i$-th smallest entry in vector $\mathbf{a}$, while without ambiguity both $\mathbf{a}(i)$ and $a_i$ refer to the $i$-th element of vector $\mathbf{a}$. The unit sphere in $\mathbb{R}^d$ is denoted by $\mathbb{S}^{d-1}$. We assume each data point of $\mathbf{Y}$ lies on one of $K$ linear subspaces denoted by $\lbrace \mathcal{S}_k \rbrace_{k=1}^K$. Here $K$ is a known constant and $\mathcal{S}_k$ is the $k$-th linear subspace. The subspace clustering problem aims assigning to each point in $\mathbf{Y}$ the membership to a  subspace (cluster) $\mathcal{S}_k$. 
	
	We write $d_k$ as the dimension of subspace $\mathcal{S}_k$ and $\mathbf{U}_k \in \mathbb{R}^{D \times d_k}$ as its corresponding orthogonal base. The number of points belong to cluster $\mathcal{S}_k$ is $N_k$. We use $\mathbf{y_i}^{(k)} \in \mathbb{R}^D$ to represent the $i$-th point from the $k$-th cluster, the set $\lbrace \mathbf{y_1}^{(k)},...,\mathbf{y_{N_k}}^{(k)} \rbrace$ contains all points that belong to $\mathcal{S}_k$. Finally, we write $F_{m,n}$ as the $F$-distribution with parameters $(m,n)$, $Dir(\boldsymbol \alpha)$ as the Dirichlet distribution with parameter vector $\boldsymbol{\alpha}$, $\beta(a, b)$ as the beta distribution with parameters $(a,b)$, $\mathcal{N}(\mu, \sigma)$ as the Gaussian distribution with mean (vector) $\mu$ and variance (covariance matrix) $\sigma^2$, $\chi^2_d$ as the chi-square distribution with $d$ degrees of freedom, and $U(\mathbb{S}^{d-1})$ as the uniform distribution on the surface of unit sphere $\mathbb{S}^{d-1}$.
	
	\section{The Algorithm for Sampling Based Subspace Clustering}\label{sec:algo:algo}
	In this section, we introduce our sampling based algorithm for subspace clustering (SBSC). In Appendix~\ref{sec:algo:parameter} we will discuss issues regarding hyper-parameters. Throughout this section, we assume the columns of $\mathbf{Y}$ have unit $\ell_2$ norm.
	
	Our main algorithm takes the raw dataset $\mathbf{Y}$ and several parameters as inputs and outputs the clustering assignment for each point in the dataset, it proceeds in two 
	stages (see the matched steps in Algorithm~\ref{sub:algo:sbsc} for further details): 
	\begin{itemize}
		\item Stage 1: In-sample clustering
		\begin{enumerate}[label=\arabic*.]
			\item Draw a subset $\mathbf{\tilde{Y}}$ of $n \ll N$ points. 
			\item For each point $\mathbf{\tilde{y}}_i \in \mathbf{\tilde{Y}}$, find its $(d_{\max} + 1)$ nearest neighboring points in $\mathbf{Y}$ and use $\mathcal{C}_i$ to denote the index set of these points. We call $\mathbf{Y}_{\mathcal{C}_i}$ the sub-cluster of $\mathbf{\tilde{y}}_i$. 
			\item Compute the affinity matrix $\mathbf{D}$  where each element $[\mathbf{D}]_{ij}$ is the similarity calculated between $\mathbf{Y}_{\mathcal{C}_i}$  and $\mathbf{Y}_{\mathcal{C}_j}$. 
			\item Sparsify the affinity matrix by removing possible spurious connections.  
			\item Conduct spectral clustering on $\mathbf{\tilde{Y}}$ with the sparsified affinity matrix. 
		\end{enumerate}
		\item Stage 2: Out-of-sample classification
		\begin{enumerate}[label=\alph*]
			\item[6.] Fit a classifier to the clustered points in $\mathbf{\tilde{Y}}$ and classify the points in $\mathbf{{Y}} \setminus  \mathbf{\tilde{Y}}$. 
		\end{enumerate}
	\end{itemize}

	Step~2 computes a neighborhood of points around each sampled points by thresholding inner product similarities, the same method that was used in \cite{heckel2015robust}.
	The intuitive reason for this step is that for normalized data, two vectors are more likely to lie in the same linear subspace if the absolute magnitude of the  inner product between the points is large. One may use other measure of similarities in Step~2 to find the neighboring points. In addition to the standard algorithm, we also present experimental results based on other measure of similarities in Section~\ref{sub:sec:num} and Appendix~\ref{sub:sec:more-numerical}.  
	
	The idea of using distance between the sub-clusters to construct an affinity matrix in Step~3 relies on the  self-representative property of linear subspaces---see Theorem~\ref{sub:thm:cluster-dis-noisy} for technical details. 
	Please note that each entry of affinity matrix measures the closeness between data points, hence it decreases with distance function. There is both theoretical and empirical evidence that sparsification of an affinity matrix by setting smaller elements to zero improves clustering results \citep{belkin2003laplacian,von2007tutorial}. For this reason in Step~4 we threshold the affinity matrix.  Once the subset is clustered, the remaining points are labeled via a classifier where a regression model is fitted on the clustered data, specifically  a residual minimization model by ridge regression. If both $n$, $d_{max}$ and $D$ are linear in $\log N$, the complexity of our algorithm is $O(N\log N)$. 
	
	Note that any classifier can be used to do the out-of-sample classification. While ridge regression model is proved to work well for linear subspace clustering problems in this paper, we encourage users of Algorithm~\ref{sub:algo:sbsc} to choose their own favorite classifier, e.g., svm, random forest, or even deep neural networks, based on their understandings of the data.
	
	\IncMargin{1em}
	\begin{algorithm}[t]
		\caption{Sub-cluster Based Subspace Clustering (SBSC) algorithm.}\label{sub:algo:sbsc}
		\SetKwData{Left}{left}\SetKwData{This}{this}\SetKwData{Up}{up}
		\SetKwFunction{Union}{Union}\SetKwFunction{FindCompress}{FindCompress}
		\SetKwInOut{Input}{input}\SetKwInOut{Output}{output}
		\Input{Data $\mathbf{Y}$, number of subspaces $K$, sampling size $n$, neighbor threshold $d_{\max}$, regularization parameters $\lambda_1$ and $\lambda_2$, residual minimization parameter $m$, affinity threshold $t_{\max}$. }
		\Output{The label vector $\boldsymbol{\ell}$ of all points in $\mathbf{Y}$}
		1. Uniformly sample $n$ points  $\tilde{\mathbf{Y}}$ from $\mathbf{Y}$. \\ 
		
		2. Construct the sub-clusters: \\
		\For{$i$ = 1 \KwTo n}  { 
			$\mathbf{p}=| \langle \mathbf{\tilde{y}}_{i} ,\mathbf{Y} \rangle|$;\\
			$\mathcal{C}_i:= \lbrace j: | \langle \mathbf{\tilde{y}}_{i}, \mathbf{y}_j \rangle | \geq \mathbf{p}_{(N-d_{\max})} \rbrace$.}
		
		3. Construct affinity matrix $[\mathbf{D}]_{ij}=e^{-d(\mathbf{Y}_{\mathcal{C}_i},\mathbf{Y}_{\mathcal{C}_j})/2}$ for $i \neq j \in \lbrace 1,...,n \rbrace$ and
		\begin{eqnarray*}
			d(\mathbf{Y}_{\mathcal{C}_i},\mathbf{Y}_{\mathcal{C}_j}) & =& ||\mathbf{Y}_{\mathcal{C}_i}-\mathbf{Y}_{\mathcal{C}_j}(\mathbf{Y}_{\mathcal{C}_j}^T \mathbf{Y}_{\mathcal{C}_j}+\lambda_1 \mathbf{I})^{-1}\mathbf{Y}_{\mathcal{C}_j}^T \mathbf{Y}_{\mathcal{C}_i}||_F \\
			& & +||\mathbf{Y}_{\mathcal{C}_j}-\mathbf{Y}_{\mathcal{C}_i}(\mathbf{Y}_{\mathcal{C}_i}^T \mathbf{Y}_{\mathcal{C}_i}+\lambda_1 \mathbf{I})^{-1}\mathbf{Y}_{\mathcal{C}_i}^T \mathbf{Y}_{\mathcal{C}_j}||_F.
		\end{eqnarray*}
		
		4. Sparsify the adjacency matrix: \\
		\For  {$j=1$ \KwTo $n$} {
			$\mathbf{v}:=[\mathbf{D}]_{j}$; \\
			\For  {$i=1$ \KwTo $n$}  {
				\If {$[\mathbf{D}]_{ij} \leq \mathbf{v}_{(n-d_{\max})}$}    {$[\mathbf{D}]_{ij}:=0$}}}
		
		5.  Cluster $\tilde{\mathbf{Y}}$ : set $\mathbf{D} := \mathbf{D} + \mathbf{D}^T$ and cluster the in-sample points in $\tilde{\mathbf{Y}}$ by applying spectral clustering on $\mathbf{D}$, use $\boldsymbol{\ell}_{in}$ to denote the labels of $\tilde{\mathbf{Y}}$.\\
		6. Label the remaining points: use the Residual Minimization by Ridge Regression (RMRR) algorithm in Appendix~\ref{sub:sec:RMRR} to classify the remaining points in $\mathbf{Y} \setminus \tilde{\mathbf{Y}}$, specifically for the out-of-sample label we have 
		\[
		\boldsymbol{\ell}_{out} = RMRR(\mathbf{Y} \setminus \tilde{\mathbf{Y}}, \tilde{\mathbf{Y}}, \boldsymbol{\ell}_{in}, \lambda_2, m)
		\]
		7. Combine $\boldsymbol{\ell}_{in}$ and $\boldsymbol{\ell}_{out}$ to get $\boldsymbol{\ell}$, the label of the whole dataset $\mathbf{Y}$.
	\end{algorithm}

	\section{Clustering Accuracy}\label{sec:theory}

	\subsection{Model Specification}\label{sec:model_setting}
	We assume all subspaces have the same dimension $d$ and the data generating process is
	\begin{equation*}
	\hat{\mathbf{y_i}}^{(k)}=\zeta_i^{(k)} \mathbf{U_k} \mathbf{a}_i^{(k)}+\hat{\mathbf{e}}_i^{(k)},\quad i=1,\ldots,N_k, \quad k=1,...,K,
	\end{equation*}
	where $\mathbf{a}_i^{(k)} \in \mathbb{R}^{d}$ is sampled from the uniform distribution on the surface of $\mathbb{S}^{d-1}$,  $\zeta_i^{(k)}$ is a random scalar such that $ \zeta_i^{(k)2}\sim \chi^2_{d}$, and $\hat{\mathbf{e}}_i^{(k)} \sim \mathcal{N}(\mathbf{0},d \sigma^2 \mathbf{I}_D)$. 
	However $\hat{\mathbf{y_i}}^{(k)}$ are unobserved and we only observe the normalized version $\mathbf{y}_i^{(k)}=\hat{\mathbf{y_i}}^{(k)}/\|\hat{\mathbf{y_i}}^{(k)}\|_2$. We then have
	\begin{equation}\label{sub:eq:normalization}
	\mathbf{y}_i^{(k)} =  \frac{\mathbf{U_k} \mathbf{a}_i^{(k)}+\sigma\mathbf{e}_i^{(k)}}{\norm{\mathbf{U}_k \mathbf{a}_i^{(k)}+\sigma\mathbf{e}_i^{(k)}}_2}.
	\end{equation}
	Consequently, each entry in $\mathbf{e}_i^{(k)}=
	\hat{\mathbf{e}}_i^{(k)}/(\sigma \zeta_i^{(k)})$ follows multivariate $t$-distribution with $d$ degrees of freedom, and ${\|\mathbf{e}_i^{(k)}}_2^2 \|_2^2 / {D} \sim F_{D,d}$. Numerically, the normalizing constant $\| \mathbf{U}_k \mathbf{a}_i^{(k)}+\sigma\mathbf{e}_i^{(k)} \|_2$ will be approximately $1$. In \cite{heckel2015robust}, the normalizing constants are treated directly as $1$ and their noise vector is a multivariate Gaussian vector. In developing theoretical guarantees of this paper, we explicitly account for the normalizing constant $\| \mathbf{U}_k \mathbf{a}_i^{(k)}+\sigma\mathbf{e}_i^{(k)} \|_2$ and its effects. 
	
	Let $\lambda_{1}^{(ij)} \geq \lambda_{2}^{(ij)} \geq...\geq \lambda_{d}^{(ij)}$ correspond to the cosine values of principal angles between $\mathcal{S}_i$ and $\mathcal{S}_j$, hence $\lambda_{1}^{(ij)} \leq 1$ and $\lambda_{d}^{(ij)} \geq 0$. Note that $\lambda_{k}^{(ij)}=\lambda_{k}^{(ji)}$ for $1\leq k \leq d$ and $1 \leq i <  j \leq K$. For each subspace $\mathcal{S}_k$, we define the uniformly maximal affinity vector to quantify its closeness with respect to all other subspaces. 
	
	\begin{defn}\label{defn:max-affinity}
		For each subspace $\mathcal{S}_k$, its uniformly maximal affinity vector with respect to other subspaces is $[\lambda_{1}^{(k)},...,\lambda_{d}^{(k)}]$ such that
		\[
		\lambda_i^{(k)}=\max_{j \neq k} \lambda_{i}^{(kj)}.
		\] 
	\end{defn}

	\begin{defn}\label{defn:sub-preserve}
		We say a subspace clustering algorithm has sub-cluster preserving property if it produces sub-clusters $\lbrace \mathbf{Y}_{\mathcal{C}_i} \rbrace_{i=1}^n$ each  containing points from only one subspace.
	\end{defn}
	If the uniformly maximal affinity vectors have small entries, corresponding to large angles, we would expect that Algorithm~\ref{sub:algo:sbsc} (SBSC) to have sub-cluster preserving property.
	
	In constructing the affinity matrix $\mathbf{D}$, we want the following property: two sub-clusters that belong to the same subspace have bigger affinities, hence smaller distances, than sub-clusters that belong to different subspaces.
	\begin{defn}\label{sub:defn:right-neighbor}
		We say $\mathbf{Y}_{\mathcal{C}_i}$ has the correct neighborhood property with distance function $d(\cdot,\cdot)$ if 
		\begin{align*}
		d(\mathbf{Y}_{\mathcal{C}_i},\mathbf{Y}_{\mathcal{C}_j}) < d(\mathbf{Y}_{\mathcal{C}_i},\mathbf{Y}_{\mathcal{C}_k}),
		\end{align*}
		for any $1 \leq j \neq k \leq n$ such that $\mathbf{Y}_{\mathcal{C}_i}$ and $\mathbf{Y}_{\mathcal{C}_j}$ belong to the same subspace, and $\mathbf{Y}_{\mathcal{C}_k}$ belongs to a different subspace than $\mathbf{Y}_{\mathcal{C}_i}$.
	\end{defn}
	
	\subsection{Theoretical Properties of SBSC}
	\subsubsection{Assumptions}
	In this section, we list all the assumptions used by lemmas and theorems of Algorithm~\ref{sub:algo:sbsc}. Detailed proofs can be found in Appendix~\ref{sub:sec:prorof-thms}. Notice $A2$ subsumes $A1$, $A3$ includes $A1$, and $A4$ assumes $A1$, $A2$ and $A3$. The meaning of the assumptions is explained at the end of this section.
	
	\begin{enumerate}[label=A\arabic*.]
		\item There exist positive constants $T_l$ and $\rho$ such that
		\begin{align}
		&T_l^2 \leq \min_{k=1,...,K} Q_{1-\frac{d_{max}}{N_k^{1-\rho}}}, \label{sub:eq:beta-lower} 
		\end{align}
		where $Q_{p}$ denotes the $p$ quantile of $\beta(\frac{1}{2},\frac{d-1}{2})$.
		\item There exist positive constants $\lbrace g_{i} \rbrace_{i=1}^2$, $\eta$ and $\rho \in (0,1)$, such that if we write $T = \frac{4g_{2} + 2g_{2}^2}{1-g_{2}} +\frac{1+g_{2}}{1-g_{2}}g_{1}$, the following inequalities hold: \eqref{sub:eq:beta-lower} with $T_l$ replaced by $T$, and 
		\begin{align}
		&\sum_{i=1}^d \left(g_{1}^2-\lambda_i^{(k)2}\right)_+^2 > \sum_{i=1}^d  \left(g_{1}^2-\lambda_i^{(k)2}\right)_-^2, \quad \sum_{i=1}^d \left(g_{1}^2-\lambda_i^{(k)2}\right)_+ > \sum_{i=1}^d  \left(g_{1}^2-\lambda_i^{(k)2}\right)_-,  \label{sub:eq:assumption-A2-1} \\
		&\frac{g_2^2}{D \sigma^2} > 3+\frac{6}{\eta}, \quad \frac{d}{\log{N}} \geq (2 +2 \eta)^2. \label{sub:eq:assumption-A2-2}
		\end{align} 
		
		\item There exist positive constants $T_l$, $q_0$, $\rho$ and $t$ such that the following inequalities hold: \eqref{sub:eq:beta-lower}, $d_{max} > d$ and 
		\begin{equation}\label{sub:eq:assumption-A3}
		\frac{(T_l^2 d_{max}-C_2)C_2 - \frac{C_1^2}{4}}{T_l^2d_{max}} \geq q_0,
		\end{equation}
		where 
		\begin{align*}
		&C_1 =  \left(2 + t\sqrt{\frac{\log{N}}{d-2}} \right)\sqrt{d_{max}},\textit{ } C_2 =\left(\sqrt{\frac{2d_{max}}{\pi (d-1)}} - 2 - t \sqrt{\frac{\log{N}}{d-2}}\right)^2 \bigg/ 2.
		\end{align*}
		\item There exist positive constants $T_l$, $g$, $\lambda$, $\eta$, $q_0$, $\rho$ and $t$ such that the following inequalities hold: \eqref{sub:eq:beta-lower}, \eqref{sub:eq:assumption-A2-1} with $g_1$ replaced by $T_l$, \eqref{sub:eq:assumption-A2-2} with $g_2$ replaced by $g$, \eqref{sub:eq:assumption-A3} and
		\begin{equation}\label{sub:eq:assumption4}
		\begin{aligned}
		&f(d):= \frac{(2g-g^2)\left(d_{max}+1\right)}{2(1-g)} \cdot \sqrt{\frac{d(1+g)^4}{q_0^2} + \frac{D-d}{\lambda^2}} \leq \frac{1}{2}, \\ 
		& \frac{f(d)\sqrt{d_{max}+1}}{1-f(d)} \cdot \sqrt{\frac{d(1+g)^4\lambda^2}{q_0^2} + D-d} \leq \frac{\lambda (1+g)^2 \sqrt{d(d_{max}+1)}}{q_0(1-g)}, \\
		& g\sqrt{\frac{d(1+g)^4\lambda^2}{q_0^2} + D-d} \leq \frac{\lambda (1+g)^2 \sqrt{d(d_{max}+1)}}{q_0(1-g)}, \\
		&\frac{6\lambda (1+g)^2 \sqrt{d(d_{max}+1)}}{q_0(1-g)} \leq \sqrt{1-T_l^2}.
		\end{aligned}
		\end{equation}
	\end{enumerate}
	
	Assumption $A1$ is used to bound the order statistics of a Beta distribution in Lemma~\ref{sub:lem:beta-bound}. It implicitly controls the ratio between $d$ and $\log{N}$. If we write $N_k = 10000$, $N = 10N_k$, $d_{max}=3d$, $T_l^2 = 0.09$, and $\rho=0.01$, then it suffices to have $\frac{d}{\log{N_k}} \leq 5$ for inequality \eqref{sub:eq:beta-lower}. 
	
	Assumption $A2$ is the subspace separation assumption. We use it for the proof of Theorem~\ref{sub:thm:sub-preserve-noisy}. In Appendix~\ref{sub:sec:prorof-thms}, we show that SBSC requires most of ${\lbrace \lambda_i^{(k)} \rbrace}_{i=1,k=1}^{d, \ \   K}$ to be smaller than $g_1$. This means large $g_1$ implies an easier clustering problem for SBSC, and vice versa. Throughout this paper we call $g_1$ the affinity threshold. Note that $T$ is a upper bound of the affinity threshold $g_1$, specifically if there was no noise $T = g_1$. From \eqref{sub:eq:beta-lower} we know that large $\frac{d}{\log N}$ implies a small $T$ and $g_1$. Therefore, large $d$ makes the clustering problem harder. This agrees with our intuition. Consider the extreme case where the subspaces are orthogonal, $\lambda_{i}^{(k)}=0\  (i=1,\ldots, d, k=1,\ldots, K)$, and Equations~\eqref{sub:eq:assumption-A2-1} are naturally true with any positive constant $g_1$. Finally, the constant $g_2$ in \eqref{sub:eq:assumption-A2-2} controls the noise term. From the first condition in  \eqref{sub:eq:assumption-A2-2} we have $\sigma < \frac{g_2}{\sqrt{D}}$. 
	
	Assumption $A3$ guarantees the sub-clusters $\lbrace \mathbf{Y}_{\mathcal{C}_i} \rbrace_{i=1}^n$ are informative. We use it mainly for the proof of Lemma~\ref{sub:lem:informative}. Here $C_1$ and $C_2$ are closely related to the permeance statistics \citep{lerman2012robust}, which measures how well a set of vectors is scattered across a space. Therefore a large $\frac{d_{max}}{d}$ implies that these vectors are well scattered. If $T_l^2$ equals to its upper bound in \eqref{sub:eq:beta-lower}, $\rho = 0.01$, $N_k=100000$, $N = 10N_k$, $d_{max} = 160d$, $t=0.05$ and we want $q_0 \geq 0.5$, $A3$ requires $\frac{d}{\log N} \leq 5$ \footnote{In this example, $\frac{d_{max}}{d}$ is fairly large. In the numerical section we found it is usually not necessary to choose large $d_{max}$. A better bound in Corollary~\ref{sub:coro:permeance} might be helpful the bridge the gap between numerical experiments and theoretical guarantee.}. 
	
	Assumption $A4$ is a combination of all previous assumptions, with slightly stronger conditions on subspace similarities and noise level; we use it for the proof of Theorem~\ref{sub:thm:cluster-dis-noisy}. The first three conditions in \eqref{sub:eq:assumption4} essentially control the value of $g$, which in turn controls the magnitude of the norm of noise terms. The last condition in \eqref{sub:eq:assumption4} controls the value of the regularization parameter in a distance function that will be defined latter.
	
	\subsubsection{Theoretical Properties of SBSC}
	Two theorems regarding the State $1$ of SBSC are discussed in this section.
	
	\begin{thm}\label{sub:thm:sub-preserve-noisy}
		Under Assumption $A2$, SBSC has sub-cluster preserving property with probability at least
		\begin{align} \label{sub:eq:thm-1-main}
		1- \sum_{k=1}^K  \frac{n_k(N_k-d_{\max})}{d_{\max}(N_k+1)(N_k^{\rho}-1)^2}-2(K-1)n e^{-\epsilon_1^2}-\frac{2N}{N^{\left(1 + \frac{\eta}{2+\eta}\right)^2}},
		\end{align}
		where 
		\begin{align}\label{sub:eq:thm1-epsilon}
		\epsilon_1 = \min_{k} \frac{\sum_{i=1}^d \left(g_{1}^2-\lambda_i^{(k)2}\right)_+ - \sum_{i=1}^d  \left(g_{1}^2-\lambda_i^{(k)2}\right)_-}{2 \sqrt{\sum_{i=1}^d \left(g_{1}^2-\lambda_i^{(k)2}\right)_+^2}+\sqrt{4 \sum_{i=1}^d \left(g_{1}^2-\lambda_i^{(k)2}\right)_+^2+2 \sum_{i=1}^d \left(g_{1}^2-\lambda_i^{(k)2}\right)_+}}.
		\end{align}
	\end{thm}
	
	If the subspaces are orthogonal with each other, i.e. $\lbrace \lambda_i^{(k)} \rbrace_{i=1, k = 1}^{d,\quad K} = 0$. Equation \eqref{sub:eq:thm1-epsilon} is
	\begin{align*}
	\epsilon_1 = \frac{\sqrt{d}}{2 + \sqrt{4 + \frac{2}{g_1^2}}}.
	\end{align*}
	This shows $\epsilon_1$ is linear in $\sqrt{d}$ and monotonically increasing in $g_1^2$. Appendix~\ref{sub:sec:more-technical} establishes general conditions on $g_1$ and $\lbrace \lambda_i^{(k)} \rbrace_{i=1, k = 1}^{d,\quad K}$ under which $\epsilon_1$ grows like $\sqrt{d}$. Combining this with Assumption $A2$, we observe that the third term of \eqref{sub:eq:thm-1-main} is small for large $N$. 
	
	Next, we use the sub-cluster preserving property established in Theorem~\ref{sub:thm:sub-preserve-noisy} to prove the theoretical guarantee for correct neighborhood property (see Definition~\ref{sub:defn:right-neighbor}). We define a distance function between two sub-clusters as
	\begin{equation}\label{sub:eq:distance-function}
	d(\mathbf{Y}_{\mathcal{C}_i},\mathbf{Y}_{\mathcal{C}_j})= \|\mathbf{Y}_{\mathcal{C}_i}-\mathbf{Y}_{\mathcal{C}_j}(\mathbf{Y}_{\mathcal{C}_j}^T \mathbf{Y}_{\mathcal{C}_j}+\lambda \mathbf{I})^{-1}\mathbf{Y}_{\mathcal{C}_j}^T\mathbf{Y}_{\mathcal{C}_i}\|_F +\|\mathbf{Y}_{\mathcal{C}_j}-\mathbf{Y}_{\mathcal{C}_i}(\mathbf{Y}_{\mathcal{C}_i}^T \mathbf{Y}_{\mathcal{C}_i}+\lambda \mathbf{I})^{-1}\mathbf{Y}_{\mathcal{C}_i}^T\mathbf{Y}_{\mathcal{C}_j}\|_F,
	\end{equation}
	where $\lambda>0$ is a regularization parameter.
	
	\begin{thm}\label{sub:thm:cluster-dis-noisy}
		Assume sub-cluster preserving property is true for SBSC with probability at least $1-p_s$, and Assumption $A4$ is satisfied. Then $\lbrace \mathbf{Y}_{\mathcal{C}_i} \rbrace_{i=1}^n$ have the correct neighborhood property with the distance function \eqref{sub:eq:distance-function} with probability at least
		\begin{align*}
		1 &- 4n(n-1)e^{-\epsilon_1^2} -\frac{2n}{N^{t^2/2}} - \sum_{k=1}^K n_k \left( \frac{N_k-d_{max}}{d_{max}(N_k+1)(N_k^{\rho}-1)^2} +2(N_k-1) e^{-{\epsilon_2^2}} \right) \nonumber \\
		& -\frac{2N}{N^{\left(1 + \frac{\eta}{2+\eta}\right)^2}} -p_s,
		\end{align*}
		where $\epsilon_1$ is defined in \eqref{sub:eq:thm1-epsilon} with $g_1$ replaced by $T_l$ and
		\begin{equation}\label{sub:eq:beta_upper}
		\epsilon_2 =  \frac{\sqrt{d-1}-1}{2 + \frac{1}{\sqrt{d-1}+1}}.
		\end{equation}
	\end{thm}
	
	\section{Experimental Results}\label{sub:sec:num}
	In this section, we test the performance of SBSC on both synthetic and benchmark datasets. In addition to Algorithm~\ref{sub:algo:sbsc}, we also consider two modifications of the SBSC algorithm. The SBSC-DSC algorithm uses optimal direction search algorithm \citep{rahmani2017subspace}
	instead of high correlation to find neighboring points in Step 2 of Algorithm~\ref{sub:algo:sbsc}. The SBSC-SSC algorithm uses lasso minimization in Step 2 of Algorithm~\ref{sub:algo:sbsc}; its numerical results are reported in Appendix~\ref{sub:sec:more-numerical}.

	The performance of the three versions of SBSC is compared to other state-of-the art algorithms. These include classic subspace clustering method: Sparse Subspace Clustering (SSC, \citealp{Vidal09, you2016divide}), Thresholding Subspace Clustering (TSC, \citealp{heckel2015robust}), Direction Search Subspace Clustering (DSC, \citealp{rahmani2017subspace}), Least Square Regression (LSR, \citealp{lu2012robust}), Low-Rank Representation (LRR, \citealp{liu2010robust}), Subspace Clustering by Orthogonal Matching Pursuit (SSC-OMP, \citealp{you2016scalable}), Elastic Net Subspace Clustering (ENSC, \citealp{you2016oracle}); and sampling based algorithms \citep{peng2013scalable}: Scalable Sparse Subspace Clustering (SSSC, reported in Appendix~\ref{sub:sec:more-numerical}), Scalable Thresholding Subspace Clustering (STSC), Scalable Direction Search (SDSC), Scalable Least Square Regression (SLSR), Scalable Low-Rank Representation (SLRR). To make fair comparisons, where possible we replicated results on our machine. Some results had to be copied from the original paper due to the unavailability of code. 
	
	Throughout this section, we use clustering accuracy, normalized mutual information (NMI) and running time as the metrics for performance evaluation. The formulas for clustering accuracy and NMI are presented in Appendix~\ref{sub:sec:metric}. To demonstrate the advantage of using sub-clusters (i.e. borrowing information from the whole dataset) to cluster the data points in the subset. For sampling based algorithms we also report their clustering accuracy on the subset. In the rest of this paper, we call the clustering accuracy on the whole dataset as accuracy, and the clustering accuracy on the subset as accuracy-sub. For randomized algorithms, reported results are averaged over $10$ trials. Additional numerical results are presented in Appendix~\ref{sub:sec:more-numerical}. The code used to generate these results can be found in the supplementary material. 
	
	\subsection{Results on Synthetic Dataset}
	In this section we evaluate tolerance to noise and scalability on synthetic data generated using
	the model specified in Section~\ref{sec:model_setting}. 
	
	\subsubsection{Tolerance to Noise}\label{sub:sec:toltonoise}
	In this section, we test the tolerance to noise of the various algorithms. From \eqref{sub:eq:normalization} we know the un-normalized signal part $\mathbf{U}_k \mathbf{a}_i^{(k)}$ has unit $\ell_2$ norm, and we can calculate the expected squared norm of the noise part is $\mathbb{E} [ \sigma^2 \| \mathbf{e}_i^{(k)} \|_2^2 ] = {\sigma^2 Dd}/{(d-2)}$. Therefore throughout this paper we define ${(d-2)}/{(\sigma^2 Dd)} $ as the signal strength (signal to noise ratio). The noise captured the amount of variation of points in $\mathbb{R}^D$.
	
	We change the signal strength from $10$ to $2$. For each value of signal strength, we simulate $10$ datasets with  $K = 20$ subspaces, where each subspace contains $N_i = 10000$ data points. For all the sampling based algorithms we fixed the sampling size as $n = 200$. 
	
	\begin{figure}[t]
		\centering
		\subfloat[]{{\includegraphics[width=7cm,height=7cm]{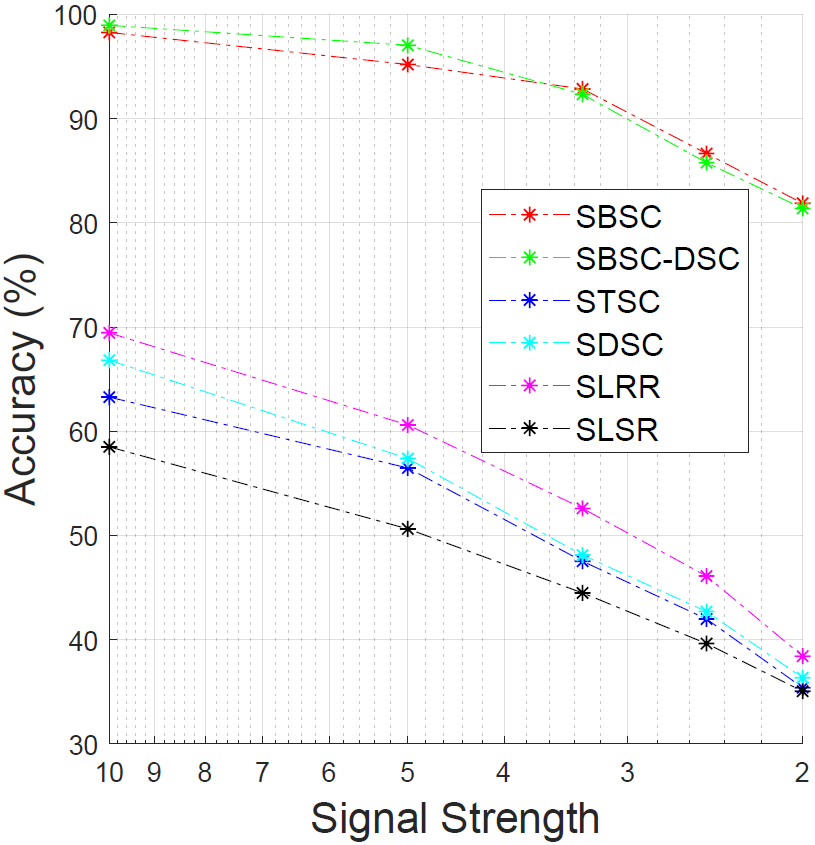} }}
		\qquad
		\subfloat[]{{\includegraphics[width=7cm,height=7cm]{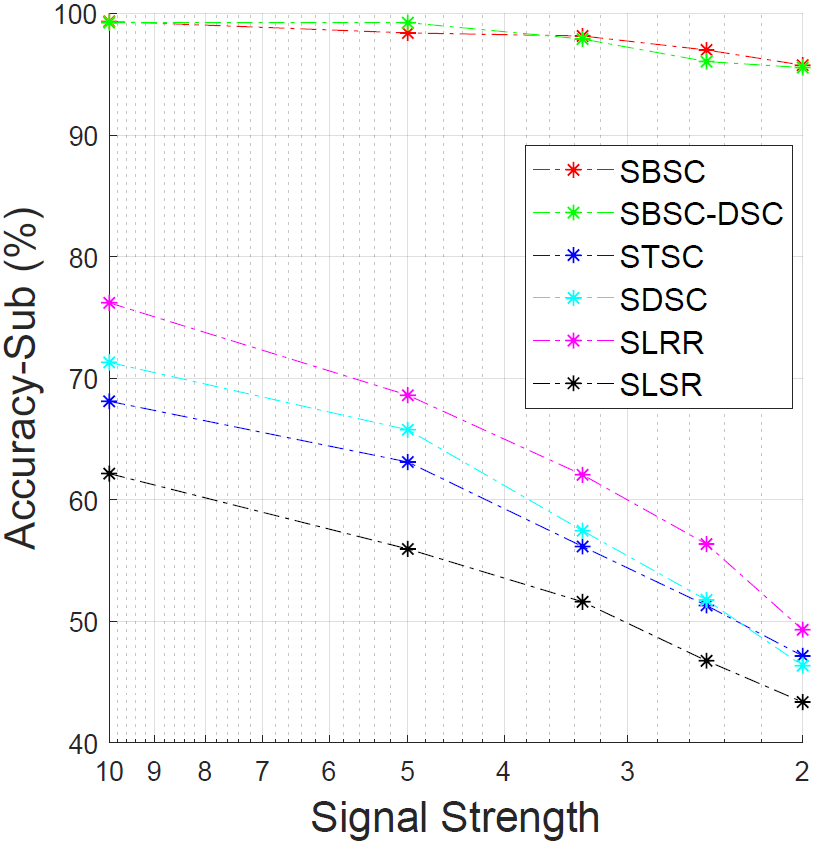} }}
		\caption{\textbf{Tolerance to Noise:} Plot of accuracy (left panel) and accuracy on subsets (right panel) for algorithms applied to synthetic datasets of Section~\ref{sub:sec:toltonoise}. The $x$-axis is the signal strength and the $y$-axis is the accuracy averaged over $10$ different datasets. SBSC performs very well.}%
		\label{fig:tol_to_noise}%
	\end{figure}
	
	The results are presented in Figure~\ref{fig:tol_to_noise}. Accuracy and accuracy-sub are plotted on the left and right hand side panels respectively. The small discrepancy between two sides shows both sampling based algorithms can deliver consistent results between in-sample clustering and out-of-sample classification. At the same time, the SBSC based algorithms constantly deliver much higher accuracy-sub than the other sampling based algorithms, this means for the synthetic datasets, borrowing information from the whole dataset significantly enhanced the clustering results for subset.    
	
	\subsubsection{Scalability}\label{sub:sec:scalability}
	In this section, we test the scalability of SBSC. Specifically, we randomly generate $K = 20$ subspaces in an ambient space with dimension $D = 30$, each of the subspaces has dimension $d = 5$. We increase $N_k$ from $100$ to $51200$, so the corresponding $N$ increases from $2000$ to $1024000$. The sampling size $n$ is $\floor{2K\log(N)}$. 
	
	The result is presented in Figure~\ref{fig:runtime}. On the right hand side y-axis, we show the average accuracy, which is around $95 \% $ across all experiments, against the number of data points $N$, this could justify our choice of $n$. On the left hand side y-axis, we show the scale plot between running time and $N$, the linear pattern here agrees with our complexity analysis. As we increase the number of data points $N$, the accuracy on the whole dataset slightly gets higher, this implies our algorithm is particularly useful for large datasets.
	
	\begin{figure}[t]
		\includegraphics[width=9cm]{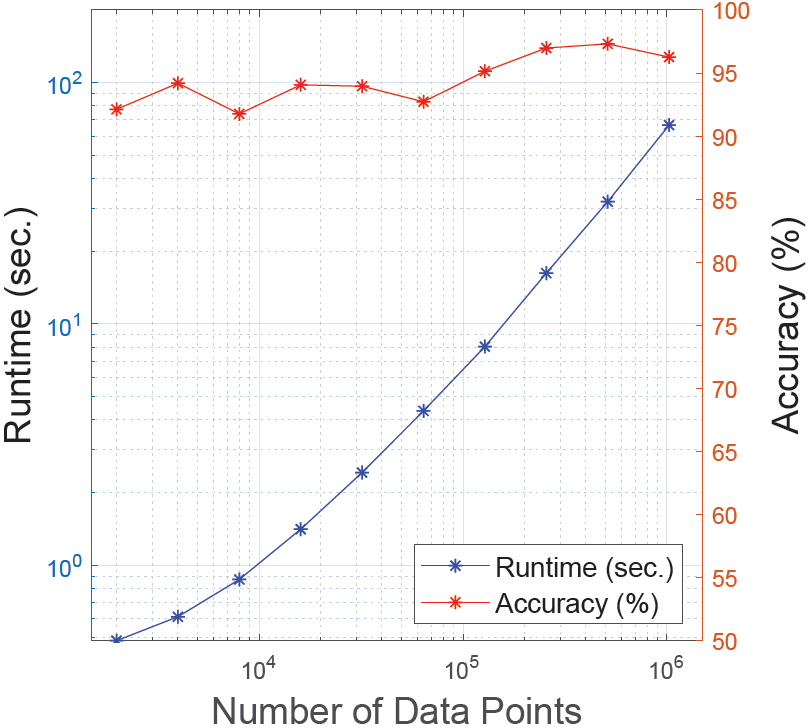}
		\centering
		\caption{\textbf{Scalability:} Plot of running time  (blue curve) and accuracy (red curve), averaged over $10$ independent datasets, versus the number of data points for SBSC applied to the data of Section~\ref{sub:sec:scalability}. Shows that the algorithm scales well.}
		\label{fig:runtime}
	\end{figure}
	
	\subsection{Results on Benchmark Datasets}
	In this section, we test SBSC on three benchmark datasets. These datasets were selected to have small, medium and large data size respectively. As expected, the advantage of SBSC over other state-of-the-art algorithms changes from marginal to significant as the size of the dataset increases.
	
	\subsubsection{The Extended Yale B dataset}
	The Extended Yale B dataset (YaleB) contains $N = 2432$ face images of $K = 38$ individuals. Each image is a front view photo of the corresponding individual with different illumination condition. To speed up the running time, a dimension reduction step is taken to pre-process the dataset \citep{rahmani2017subspace}, hence in our experiment $D = 500$.
	
	From Table~\ref{tab:yale}, we see that the DSC algorithm delivered the highest accuracy and NMI. As expected, for this small dataset sampling based algorithms did not perform as well. The reason is, there is not enough observations to form sufficiently large homogeneous sub-clusters. This issue is worse for datasets with large number of clusters.

	\begin{table}[h]
		\centering
		\begin{tabular}{|c|c|c|c|c|}\hline
			\textbf{Method}    & \textbf{Accuracy (\%)}         & \textbf{Accuracy-Sub (\%)}                                     & \textbf{NMI (\%)}                                                  & \textbf{Runtime (sec.)} \\ \hline
			SBSC  & \begin{tabular}[c]{@{}c@{}}26.53  \\ (1.8)\end{tabular} & \begin{tabular}[c]{@{}c@{}} 31.56\\ (1.94)\end{tabular}   & \begin{tabular}[c]{@{}c@{}} 41.67 \\ (1.44)\end{tabular} & 27    \\ \hline
			
			SBSC-DSC  & \begin{tabular}[c]{@{}c@{}}60.46  \\ (1.62)\end{tabular} & \begin{tabular}[c]{@{}c@{}} \textbf{62.76}\\ \textbf{(1.88)}\end{tabular}   & \begin{tabular}[c]{@{}c@{}} 70.15 \\ (0.97)\end{tabular} & 35   \\ \hline
			
			STSC  & \begin{tabular}[c]{@{}c@{}}17.45  \\ (1.27)\end{tabular} & \begin{tabular}[c]{@{}c@{}} 21.72\\ (1.7)\end{tabular}   & \begin{tabular}[c]{@{}c@{}} 29.17 \\ (1.28)\end{tabular} & \textbf{0.8}   \\ \hline
			
			SDSC  & \begin{tabular}[c]{@{}c@{}}52.18  \\ (1.96)\end{tabular} & \begin{tabular}[c]{@{}c@{}} 60.78\\ (2.04)\end{tabular}   & \begin{tabular}[c]{@{}c@{}} 54.61\\ (1.85)\end{tabular} & 3 \\ \hline
			
			SLRR  & \begin{tabular}[c]{@{}c@{}}18.35  \\ (0.64)\end{tabular} & \begin{tabular}[c]{@{}c@{}} 28.6\\ (1.64)\end{tabular}   & \begin{tabular}[c]{@{}c@{}} 26.33\\ (0.57)\end{tabular} & 7  \\ \hline
			
			SLSR  & \begin{tabular}[c]{@{}c@{}}26.48  \\ (1.98)\end{tabular} & \begin{tabular}[c]{@{}c@{}} 37.78\\ (2.33)\end{tabular}   & \begin{tabular}[c]{@{}c@{}} 35.21\\ (2.22)\end{tabular} & 1.5 \\ \hline
			
			TSC  & 26.19 &  NA  & 39.31 & 1 \\ \hline
			
			DSC  & \textbf{91.69} &  NA  & \textbf{93.43} &45 \\ \hline
			
			
			
			SSC  & 52.96 &  NA  & 60.15 & 169 \\ \hline
			
			SSC-OMP  & 73.88 &  NA  & 80.1 & 1.51 \\ \hline
			
			SSC-ENSC  & 60.81 &  NA  & 69.4 & 3 \\ \hline
		\end{tabular}
		\caption{\textbf{Performance on Extended Yale B:} The results of sampling based algorithms are averaged over $10$ independent runs and the corresponding standard deviations are presented in  parentheses. The remaining algorithms do not use random subsampling and were run only once.  The metric reported as ``NA'' is not defined for these algorithms.
			The best result of each performance metric is in {\bf bold}.
			DSC delivers the highest accuracy and NMI.}
		\label{tab:yale}
	\end{table}
	
	\subsubsection{The Zipcode dataset}
	The Zipcode dataset \citep{le1990handwritten} is a medium-size dataset with $N =  9298$ data points and $D = 256$, each point represents an image of handwritten digit, hence $K = 10$. 
	
	From Table~\ref{tab:zipcode} we see, that SBSC delivers the best results in all metrics except running time. However the differences in running time are marginal for sampling based algorithms. The accuracy-sub of SBSC is again better than that of traditional sampling based algorithms (see SBSC versus STSC, and SBSC-DSC versus SDSC). 
	
	
	\begin{table}
		\centering
		\begin{tabular}{|c|c|c|c|c|}
			\hline
			\textbf{Method}    & \textbf{Accuracy (\%)}         & \textbf{Accuracy-Sub (\%)}                                     & \textbf{NMI (\%)}                                                  & \textbf{Runtime (sec.)} \\ \hline
			SBSC  & \begin{tabular}[c]{@{}c@{}} \textbf{69.4}  \\ \textbf{(5.17)}\end{tabular} & \begin{tabular}[c]{@{}c@{}} \textbf{72.04}\\ \textbf{(5.37)}\end{tabular}   & \begin{tabular}[c]{@{}c@{}} 70.3 \\ (1.75)\end{tabular} & 10  \\ \hline
			
			SBSC-DSC  & \begin{tabular}[c]{@{}c@{}}60.84  \\ (2.87)\end{tabular} & \begin{tabular}[c]{@{}c@{}} 64.92\\ (3.37)\end{tabular}   & \begin{tabular}[c]{@{}c@{}} 62.92 \\ (0.65)\end{tabular} & 71   \\ \hline
			
			STSC  & \begin{tabular}[c]{@{}c@{}}55.28  \\ (4.25)\end{tabular} & \begin{tabular}[c]{@{}c@{}} 60.86\\ (3.8)\end{tabular}   & \begin{tabular}[c]{@{}c@{}} 53.1 \\ (2.49)\end{tabular} & \textbf{2}   \\ \hline
			
			SDSC  & \begin{tabular}[c]{@{}c@{}}45.62  \\ (6.43)\end{tabular} & \begin{tabular}[c]{@{}c@{}} 51.16\\ (7.31)\end{tabular}   & \begin{tabular}[c]{@{}c@{}} 45.99\\ (3.88)\end{tabular} & 3  \\ \hline
			
			SLRR  & \begin{tabular}[c]{@{}c@{}}63.21 \\ (3.96)\end{tabular} & \begin{tabular}[c]{@{}c@{}} 65.16\\ (4.03)\end{tabular}   & \begin{tabular}[c]{@{}c@{}} 66.09\\ (1.39)\end{tabular} & 10  \\ \hline
			
			SLSR  & \begin{tabular}[c]{@{}c@{}}58.66 \\ (0.99)\end{tabular} & \begin{tabular}[c]{@{}c@{}} 59.85\\ (0.98)\end{tabular}   & \begin{tabular}[c]{@{}c@{}} 62.54\\ (1.38)\end{tabular} & 4\\ \hline
			
			TSC  & 65.73 &  NA  & \textbf{78.97} & 115 \\ \hline
			
			DSC  & 60.92 &  NA  & 68.43 & 800 \\ \hline
			
			SSC  & 48.16 &  NA  & 52.37 & 2165 \\ \hline
			
			SSC-OMP  & 23.58 &  NA  & 25.51 & 3 \\ \hline
			
			SSC-ENSC  & 44.65 &  NA  & 50.08 & 36 \\ \hline
		\end{tabular}
		\caption{\textbf{Performance on Zipcode:} See the caption of Table~\ref{tab:yale} for description. We see SBSC delivers the highest accuracy and accuracy-sub, while TSC delivers the highest NMI.}
		\label{tab:zipcode}
	\end{table}

	\subsubsection{The MNIST dataset}
	The MNIST dataset (MNIST) contains $N=70000$ data points, each point represents an image of handwritten digit. The original data was transferred into $\mathbb{R}^{500}$ by convolutional neural network and PCA \citep{you2016scalable}. Again $K=10$.  
	
	From Table~\ref{tab:mnist} we see, that SBSC with bagging described in Appendix~\ref{sub:sec:bagging} dominates in nearly every aspect. The large data size of MNIST makes the sampling based algorithms run much faster than traditional methods. Due to their slow speed we did not use bagging on non-sampling algorithms.
	
	\begin{table}[h]
		\centering
		\begin{tabular}{|c|c|c|c|c|}
			\hline
			\textbf{Method}    & \textbf{Accuracy (\%)}         & \textbf{Accuracy-Sub (\%)}                                     & \textbf{NMI (\%)}                                                  & \textbf{Runtime (sec.)} \\ \hline
			SBSC(1) & \begin{tabular}[c]{@{}c@{}} 95.74 \\ (0.28) \end{tabular} & \begin{tabular}[c]{@{}c@{}} \textbf{96.44}\\ \textbf{(1.14)}\end{tabular}   & \begin{tabular}[c]{@{}c@{}} 89.9 \\ (0.47)\end{tabular} & 38   \\ \hline
			
			SBSC(6)  & \begin{tabular}[c]{@{}c@{}} \textbf{97.15}  \\ \textbf{(0.16)}\end{tabular} & \begin{tabular}[c]{@{}c@{}} 95.25\\ (1.78)\end{tabular}   & \begin{tabular}[c]{@{}c@{}} \textbf{92.6} \\ \textbf{(0.3)}\end{tabular} & 246    \\ \hline
			
			STSC(1)  & \begin{tabular}[c]{@{}c@{}}30.2  \\ (2.13)\end{tabular} & \begin{tabular}[c]{@{}c@{}} 67.8\\ (3.95)\end{tabular}   & \begin{tabular}[c]{@{}c@{}} 11.52 \\ (2.12)\end{tabular} & \textbf{28}   \\ \hline
			
			STSC(6)  & \begin{tabular}[c]{@{}c@{}}40.12  \\ (2.84)\end{tabular} & \begin{tabular}[c]{@{}c@{}} 65.23\\ (2.2)\end{tabular}   & \begin{tabular}[c]{@{}c@{}} 22.53 \\ (2.36)\end{tabular} & 172   \\ \hline
			
			SLRR(1)  & \begin{tabular}[c]{@{}c@{}}79.5 \\ (1.19)\end{tabular} & \begin{tabular}[c]{@{}c@{}} 79.46\\ (1.3)\end{tabular}   & \begin{tabular}[c]{@{}c@{}} 79.9\\ (1.52)\end{tabular} & 59  \\ \hline
			
			SLRR(6)  & \begin{tabular}[c]{@{}c@{}} 81 \\ (0.67)\end{tabular} & \begin{tabular}[c]{@{}c@{}} 79.6\\ (0.44)\end{tabular}   & \begin{tabular}[c]{@{}c@{}} 83.75\\ (0.74)\end{tabular} & 378  \\ \hline
			
			SLSR(1)  & \begin{tabular}[c]{@{}c@{}}75.06 \\ (6.11)\end{tabular} & \begin{tabular}[c]{@{}c@{}} 74.62\\ (5.99)\end{tabular}   & \begin{tabular}[c]{@{}c@{}} 76.21\\ (3.63)\end{tabular} & 54 \\ \hline
			
			SLSR(6)  & \begin{tabular}[c]{@{}c@{}}79.64 \\ (0.85)\end{tabular} & \begin{tabular}[c]{@{}c@{}} 76.43\\ (1.85)\end{tabular}   & \begin{tabular}[c]{@{}c@{}} 81.24\\ (0.95)\end{tabular} & 326 \\ \hline
			
			TSC  & 84.63 &  NA  & 87.47 & 1184 \\ \hline
			
			SSC (DC1)$^*$       & 96.55                  & NA                                      & NA                                                         & 5254                    \\ \hline
			SSC (DC2)$^*$         & 96.1                      & NA                                   & NA                                                         & 4390                    \\ \hline
			SSC (DC5)$^*$         & 94.9                 & NA                                        & NA                                                         & 1596                    \\ \hline
			
			SSC-OMP  & 81.51  &  NA  & 84.45 & 232 \\ \hline
			
			SSC-ENSC  & 93.79 &  NA  & 88.8 & 500 \\ \hline
		\end{tabular}
		\caption{\textbf{Performance on MNIST:} 
			See description in Table~\ref{tab:yale}.
			Additionally, the results of methods with star marks are copied from the original paper that did not report NMI. The number in the parenthesis next to the algorithm name is the number of bags. We see SBSC dominates other algorithms in nearly every aspect.}
		\label{tab:mnist}
	\end{table}

	\section{Conclusion}
	While the idea of subsampling was discussed  before \citep{Peng2015:SRSC_full}, the main contribution of this paper is finding neighborhood points in the whole dataset and using cluster-wise distance to cluster points in the subset. This results in a higher clustering accuracy.   
	
	In calculating cluster-wise distances and classifying out-of-sample points, ridge regression seems to be the most direct method. However, the algorithm itself is highly flexible. Users are encouraged to try different distance functions, classification methods, and metrics in finding neighboring points. 
	
	\nocite{*}
	\appendix
	\section{Practical Recommendations for Parameter Setting}\label{sec:algo:parameter}
	In Algorithm~\ref{sub:algo:sbsc} (SBSC), we assume the number of clusters is known. Several methods have been developed for the estimation of the number of clusters from data \citep{ng2002spectral}. Intuitively, $n$ should be large enough to represent the structure of the whole dataset while still being relatively small to reduce the computational complexity. In our numerical experiments, we choose $n$ to be linear in $K \log N$. 
	
	Ideally, each sub-cluster $\mathbf{Y}_{\mathcal{C}_i}$ should well represent the subspace it belongs to, i.e., contain at least one basis of that subspace. Therefore we want $d_{\max}$ to be larger than $\max_{k=1,...,K} d_k$ which is unknown. For this reason we set $d_{\max}$ to grow linearly with $D$. Similarly the residual minimization parameter $m$ should also be linear in $D$. 
	
	\subsection{Threshold Selection}
	The spectral clustering algorithm can deliver exact clustering result \citep{von2007tutorial} if the graph induced by the affinity matrix $(\mathbf{D}+\mathbf{D}^T)$  has no false connections; and has exactly $K$ connected components. For a large threshold parameter $t_{\max}$ on the affinity matrix more entries in $\mathbf{D}$ will be kept and our algorithm is more likely to have false connections, while small $t_{\max}$ eliminates false connections but might incur non-connectivity. 
	
	Let us consider a heuristic situation: the subset we sampled contains exactly the same points (hence $\frac{n}{K}$ points) for each cluster. Then if we choose the threshold index $t_{\max}$ to be $\frac{n}{2K}$, the induced graph from our affinity matrix will have no false connection (given that points from same subspace have bigger similarities between each other) and the clusters themselves will be connected, therefore the spectral clustering algorithm will deliver the exact clustering result \citep{luxburg2005limits}.
	
	In reality clusters do not usually have same points in $\hat{\mathbf{Y}}$, hence we choose $t_{\max}$ to start from a relatively large number $\frac{n}{0.5K}$ and gradually increase it. Based on different threshold values, we can generate different label vectors on the subset $\hat{\mathbf{Y}}$, intuitively label vectors that can deliver highly accurate results should be similar to each other or stable. Based on this intuition, we developed a simple adaptive algorithm for finding  an ``optimal'' affinity threshold $t_{\max}$; see supplementary code for details. Based on our observation, choosing $t_{max}$ adaptively works well with datasets where each cluster has large amount of points. 
	
	\subsection{Combining Runs of the Algorithm}\label{sub:sec:bagging}
	Thanks to the speed of our algorithm, we can conduct several independent runs for one experiment (for sampling based algorithms, the results between independent runs might be different) with acceptable running time. In order to make full use of such advantage, we designed an algorithm to combine the results via bagging from several runs of SBSC \citep{breiman1996bagging}. Unlike the classification problem, we need to conduct label switching, see Algorithm~\ref{sub:algo:bagging} for details on how label switching is addressed. Please note that bagging can be used for any clustering algorithms. In Section~\ref{sub:sec:num} and Appendix~\ref{sub:sec:more-numerical} we report the results, both with and without bagging, for all sampling based algorithms. 
	
	\begin{algorithm}
		\caption{Bagging of Clustering Labels}\label{sub:algo:bagging}
		\SetKwData{Left}{left}\SetKwData{This}{this}\SetKwData{Up}{up}
		\SetKwFunction{Union}{Union}\SetKwFunction{FindCompress}{FindCompress}
		\SetKwInOut{Input}{input}\SetKwInOut{Output}{output}
		\Input{The label vectors $\lbrace \mathbf{l}_j \rbrace_{j=1}^b \in \mathbb{R}^N$ from $b$ independent runs. The number of clusters $K$. Note that each entry of $\mathbf{l}_j$ is a positive integer from $1$ to $K$. }
		\Output{The final label vector $\mathbf{l}_0 \in \mathbb{R}^N$.}
		
		\For  {$m=1$ \KwTo $b$}   {
			Write $\mathcal{M}_j = \lbrace r : \mathbf{l}_m(r) =  j\rbrace$, $j=1,...,K$. \\
			\For  {$i=1$ \KwTo $b$ \textbf{and} $i \neq m$}   {
				1. Write $\mathcal{I}_q =  \lbrace r : \mathbf{l}_i(r) =  q\rbrace$, $q=1,...,K$.
				Let $\mathbf{S} \in \mathbb{R}^{K \times K}$ be a score matrix where
				\[
				[\mathbf{S}]_{jq} = \frac{\mathbf{card}(\mathcal{M}_j \cap \mathcal{I}_q)}{\min(\mathbf{card}(\mathcal{M}_j),\ \mathbf{card}(\mathcal{I}_q))},\ 1\leq j,q \leq K.
				\]
				2. Switch the labels in $\mathbf{l}_i$ based on score matrix $\mathbf{S}$:\\
				\For  {$k=1$ \KwTo $K$}   {
					Let $q = \argmax_{j} [\mathbf{S}]_{jk}$. For $\forall r \in \mathcal{I}_q$ set $\mathbf{l}_i(r) := k$ .
				}
			}
		}
		\For  {$n=1$ \KwTo $N$}{
			Set $\mathbf{l}_0(n) := \mathbf{mode}(\lbrace \mathbf{l}_j(n) \rbrace_{j=1}^b)$.
		}
	\end{algorithm}
	The Step $2$ of Algorithm~\ref{sub:algo:bagging} has a subtle issue: there might exists an integer $q$ such that $q = \argmax_{j} [\mathbf{S}]_{jk} = \argmax_{j} [\mathbf{S}]_{jr}$. Please see our supplementary codes on how to tackle this problem.
	
	\section{Clustering Accuracy and Normalized Mutual Information}\label{sub:sec:metric}
	The clustering accuracy measures the percentage of correctly labeled data points \citep{you2016scalable}. It is calculated by 
	\begin{equation*}
	accr = \max_{\pi} \frac{100}{N} \sum_{i,j} l_{\pi(i)j}^{est} l_{ij}^{true}, \ 1\leq i \leq K,\ 1\leq j \leq N.
	\end{equation*}
	Here $\pi$ is the permutation of $K$ labels. The estimated label indicator $l_{\pi(i)j}^{est}$ equals to $1$ if and only if we assign label $\pi(i)$ to the $j$-th point, and $0$ otherwise. The ground-truth label indicator $l_{ij}^{true}$ equals to $1$ if and only if the $j$-th point has label $i$, and $0$ otherwise.
	
	The normalized mutual information \citep{strehl2002cluster} is calculated by
	\begin{equation*}
	\mbox{NMI}(\mathbf{l}^{est}, \mathbf{l}^{true}) = \frac{I(\mathbf{l}^{est}, \mathbf{l}^{true})}{\sqrt{H(\mathbf{l}^{est})H(\mathbf{l}^{true})}}.
	\end{equation*}
	Here $\mathbf{l}^{est}$ and $\mathbf{l}^{true}$ are estimated/ground-truth label vectors, respectively. We use $I(\mathbf{l}^{est}, \mathbf{l}^{true})$ to denote the mutual information between $\mathbf{l}^{est}$ and $\mathbf{l}^{true}$, and $H(\mathbf{l}^{est})$ to denote the entropy of $\mathbf{l}^{est}$. Similarly for $H(\mathbf{l}^{true})$.
	
	\section{Proofs of Main Theorems}\label{sub:sec:prorof-thms}
	In this section, we will prove the theorems from
	Section~\ref{sec:theory}. The following Lemmas are used to prove Theorem~\ref{sub:thm:sub-preserve-noisy}.
	\noindent
	\begin{lem}\label{lem:1}
		Let $\mathbf{b}$ be a vector sampled uniformly from $\mathbb{S}^{d-1}$, and $\lambda_k$ $(k=1,..,d)$ be constants such that $1 \geq \lambda_1 \geq \lambda_2 \geq ... \geq \lambda_d \geq 0$. For constant $g_1 \in (\lambda_d,\lambda_1)$, we write $r_i=(g_1^2-\lambda_i^2)_+$ and $s_i=(g_1^2-\lambda_i^2)_-$. Assuming that $\sum_{i=1}^d r_i > \sum_{i=1}^d s_i$, then 
		\[
		\mathbb{P} \left[ \sum_{i=1}^d \left( \lambda_{i} b_{i} \right) ^2 < g_1^2 \right] \geq 1-2 e^{-\epsilon^2},
		\]
		where 
		\[
		\epsilon = \frac{ \sum_{i=1}^d \left( r_i-s_i \right) }{\left( \sqrt{\sum_{i=1}^d r_i^2}+\sqrt{\sum_{i=1}^d s_i^2} \right) + \sqrt{\left( \sqrt{\sum_{i=1}^d r_i^2}+\sqrt{\sum_{i=1}^d s_i^2} \right)^2+2 s_1 \sum_{i=1}^d \left( r_i-s_i \right)}}.
		\]
	\end{lem}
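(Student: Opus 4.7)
The plan is to recognise the target event as a positive-gap event for a weighted sum of independent chi-squared variables and attack it with the Laurent--Massart tail bounds. Using $\sum_i b_i^2 = 1$, the inequality $\sum_i \lambda_i^2 b_i^2 < g_1^2$ is equivalent to $\sum_i (g_1^2 - \lambda_i^2) b_i^2 > 0$. Since $r_i$ and $s_i$ are the positive and negative parts of $g_1^2 - \lambda_i^2$ (and cannot both be nonzero at the same coordinate), this in turn is the same as $\sum_i r_i b_i^2 > \sum_i s_i b_i^2$.

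Next I would pass to a Gaussian representation of the uniform law on $\mathbb{S}^{d-1}$: write $\mathbf{b} = \mathbf{z}/\|\mathbf{z}\|_2$ with $\mathbf{z} \sim \mathcal{N}(\mathbf{0}, I_d)$. Because the event is degree-zero homogeneous in $\mathbf{b}$, it is the same as $Y_+ > Y_-$ where $Y_+ := \sum_i r_i z_i^2$ and $Y_- := \sum_i s_i z_i^2$. The index sets $\{i:r_i>0\}$ and $\{i:s_i>0\}$ are disjoint, so $Y_+$ and $Y_-$ are independent positively weighted sums of $\chi^2_1$ variables, with $\mathbb{E}[Y_+] = \sum_i r_i$ and $\mathbb{E}[Y_-] = \sum_i s_i$; by hypothesis $\mathbb{E}[Y_+] > \mathbb{E}[Y_-]$.

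Then I would invoke the standard Laurent--Massart bounds: for any $t>0$,
\[
\mathbb{P}\bigl(Y_+ \le \textstyle\sum_i r_i - 2\sqrt{t\sum_i r_i^2}\bigr) \le e^{-t}, \qquad \mathbb{P}\bigl(Y_- \ge \textstyle\sum_i s_i + 2\sqrt{t\sum_i s_i^2} + 2 s_1 t\bigr) \le e^{-t},
\]
using that $\max_i s_i = s_1$ (the $\lambda_i$, and hence the $s_i$, are nonincreasing in $i$). A union bound then shows that with probability at least $1-2e^{-t}$,
\[
Y_+ - Y_- \;\ge\; \textstyle\sum_i (r_i - s_i) \;-\; 2\bigl(\sqrt{\sum_i r_i^2} + \sqrt{\sum_i s_i^2}\bigr)\sqrt{t} \;-\; 2 s_1 t.
\]

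Finally I would pick $t$ as large as possible so that this lower bound remains nonnegative. Setting $u = \sqrt{t}$, this is the quadratic inequality $2 s_1 u^2 + 2\bigl(\sqrt{\sum_i r_i^2} + \sqrt{\sum_i s_i^2}\bigr) u - \sum_i(r_i - s_i) \le 0$, whose positive root, after rationalising the quadratic formula by multiplying numerator and denominator by the conjugate, is precisely the $\epsilon$ stated in the lemma. Taking $t = \epsilon^2$ gives the bound $1-2e^{-\epsilon^2}$. The argument is mostly bookkeeping; the one point I expect to be delicate is retaining the $2 s_1 t$ term in the upper-tail bound for $Y_-$, because that term is exactly what produces the second summand inside the square root in the denominator of $\epsilon$ and is easy to drop if one only uses the symmetric (lower-tail) form of Laurent--Massart on both sides.
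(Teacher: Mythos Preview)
Your proposal is correct and follows essentially the same route as the paper's proof: Gaussian representation of $\mathbf b$, the two one-sided Laurent--Massart inequalities applied to $Y_+=\sum_i r_i z_i^2$ and $Y_-=\sum_i s_i z_i^2$, a union bound, and solving the same quadratic in $\sqrt t$ (with the $2s_1 t$ term retained) to obtain the stated $\epsilon$. The only cosmetic difference is that the paper sets the two thresholds equal and solves for the common deviation level directly, whereas you phrase it as choosing the largest $t$ for which the lower bound on $Y_+-Y_-$ stays nonnegative; these yield the same equation and the same $\epsilon$.
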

	\begin{proof}
		We write $b_i=\frac{z_i}{\sqrt{\sum_{j=1}^d z_j^2}}$, where $\lbrace z_i \rbrace_{i=1}^d$ are i.i.d. $\mathcal{N}(0,1)$ random variables. The goal is to bound 
		\begin{align*}
		\mathbb{P} \left[\sum_{i=1}^d \left( g_1^2-\lambda_i^2 \right)_- \cdot z_i^2 \geq \sum_{i=1}^d \left( g_1^2-\lambda_i^2 \right)_+ \cdot z_i^2 \right]=\mathbb{P} \left[\sum_{i=1}^d s_i \cdot z_i^2 \geq \sum_{i=1}^d r_i \cdot z_i^2 \right].
		\end{align*}
		Note that $g_1 \in (\lambda_d,\lambda_1)$, hence both $\sum_{i=1}^d r_i$ and $\sum_{i=1}^d s_i$ are strictly positive. 
		
		Now we write $X=\sum_{i=1}^d s_i \cdot z_i^2$ and $Y=\sum_{i=1}^d r_i \cdot z_i^2$. Applying Lemma $1$ in \cite{laurent2000adaptive} we have for positive constants $\epsilon_a$ and $\epsilon_b$ the following inequalities are true
		\begin{align*}
		& \mathbb{P} \left[X \geq \sum_{i=1}^d s_i+ 2\sqrt{\sum_{i=1}^d s_i^2 } \epsilon_a+ 2s_1 \epsilon_a^2 \right]\leq e^{-\epsilon_a^2}, \quad \mathbb{P} \left[Y \leq \sum_{i=1}^d r_i - 2 \sqrt{\sum_{i=1}^d r_i^2 }\epsilon_b \right]\leq e^{-\epsilon_b^2}.
		\end{align*}
		
		We set $\epsilon_a=\epsilon_b$ and 
		\begin{align*}
		\sum_{i=1}^d s_i+ 2\sqrt{\sum_{i=1}^d s_i^2 } \epsilon_a+ 2s_1 \epsilon_a^2 =\sum_{i=1}^d r_i - 2 \sqrt{\sum_{i=1}^d r_i^2 }\epsilon_b.
		\end{align*}
		Solving the above quadratic equation we have 
		\begin{align*}
		\epsilon_a = \epsilon_b = \frac{ \sum_{i=1}^d \left( r_i-s_i \right)}{ \left( \sqrt{\sum_{i=1}^d r_i^2}+\sqrt{\sum_{i=1}^d s_i^2} \right) + \sqrt{\left( \sqrt{\sum_{i=1}^d r_i^2}+\sqrt{\sum_{i=1}^d s_i^2} \right) ^2+2 s_1 \sum_{i=1}^d \left( r_i-s_i \right)}}.
		\end{align*}
		Consequently
		\begin{align*}
		\mathbb{P} \left[ X \geq Y \right] & \leq  \mathbb{P}\left[ X \geq \sum_{i=1}^d s_i+ 2\sqrt{\sum_{i=1}^d s_i^2 } \epsilon_a+ 2s_1 \epsilon_a^2 \right]+\mathbb{P} \left[ Y \leq \sum_{i=1}^d r_i - 2 \sqrt{\sum_{i=1}^d r_i^2 }\epsilon_b \right]\\
		& \leq e^{-\epsilon_a^2}+e^{-\epsilon_b^2}.
		\end{align*}
		Substituting $\epsilon_a$ and $\epsilon_b$ into the inequality above yields the result.
	\end{proof}
	
	The following bound on F-distributed random variables follows from Lemma~\ref{lem:1}.
	
	\begin{coro}\label{coro:f_upper_bound}
		Let $X \sim F(m,n)$, and $m,n \geq 2$. Then for constant $q >1$, we have
		\[
		\mathbb{P} \left[X \geq q \right]\leq 2 e^{-\epsilon^2},
		\]
		where $\epsilon=\frac{1}{2} \left[- \left( \sqrt{m}+\frac{qm}{\sqrt{n}} \right) + \sqrt{\left( \sqrt{m}+\frac{qm}{\sqrt{n}} \right)^2+2m \left( q-1 \right)} \right]$.
	\end{coro}
	\begin{proof}
		We write $b_i =  \frac{z_i}{\sum_{i=1}^{m+n} z_i^2}$, and $X = \frac{(\sum_{i=1}^m z_i^2)/m}{(\sum_{i=m+1}^{m+n} z_i^2)/n}$, where $\lbrace z_i \rbrace_{i=1}^{m+n}$ are i.i.d. $\mathcal{N}(0,1)$ random variables. Then we have 
		\begin{align*}
		\mathbb{P}\left[X \geq q \right] = \mathbb{P}\left[\sum_{i=1}^m \frac{1}{mq} \cdot z_i^2 \geq \sum_{i=m+1}^{m+n} \frac{1}{n} \cdot z_i^2 \right] .
		\end{align*}
		The corollary follows by selecting $\lambda_i = \sqrt{\frac{1}{2}+\frac{1}{mq}}$ for $i=1,...,m$, $\lambda_i = \sqrt{\frac{1}{2} -\frac{1}{n}}$ for $i=m+1,...,m+n$, and $g_1 = \sqrt{\frac{1}{2}}$ in Lemma~\ref{lem:1}. 
	\end{proof}
	
	Lemma~\ref{sub:lem:beta-bound} states a bound on the order statistics of Beta distributed random variables.
	\begin{lem}\label{sub:lem:beta-bound}
		Suppose $T_l$ satisfy Assumption $A1$. For any $k=1,...,K$, let $\lbrace B_{(i)} \rbrace_{i=1}^{N_k-1}$ be the order statistics from a sample of $(N_k-1)$ i.i.d $\beta(\frac{1}{2},\frac{d-1}{2})$ random variables, then
		\[
		\mathbb{P} \left[ B_{(N_k - 1)} \geq \frac{1}{2} \right] \leq  2(N_k - 1)  e^{-{\epsilon_2}^2},
		\]
		and 
		\[
		\mathbb{P}\left[B_{(N_k - d_{\max})} \leq T_l^2 \right] \leq \frac{(N_k-d_{\max})}{d_{\max}\left( N_k+1 \right)  \left( N_k^{\rho}-1 \right)^2}.
		\]
	\end{lem}
	Here $\epsilon_2$ is defined in \eqref{sub:eq:beta_upper}.
	\begin{proof}
		Let $B \sim \beta(\frac{1}{2},\frac{1}{d-1})$. Then we can write $B = \frac{z_1^2}{\sum_{i=1}^d z_i^2}$, where $\lbrace z_i \rbrace_{i=1}^{d}$ are i.i.d. $\mathcal{N}(0,1)$ random variables. Select $\lambda_1 = 1$, $\lambda_i = 0$ ($i=2,..,d$) and and $g_1 = \frac{1}{\sqrt{2}}$ in Lemma~\ref{lem:1}. Note the following fact
		\begin{align*}
		\epsilon_2 = \frac{\sqrt{d-1}-1}{2 + \frac{1}{\sqrt{d-1} + 1}} \leq \frac{d - 2}{\left( \sqrt{d-1} + 1 \right) + \sqrt{\left( \sqrt{d-1} + 1 \right) + 2(d-2)}}.
		\end{align*}
		From Lemma~\ref{lem:1} we have $\mathbb{P} \left[ B \geq \frac{1}{2} \right] \leq 2e^{-{\epsilon_2}^2}$. Therefore by union bound inequality we have
		\begin{align*}
		\mathbb{P} \left[ B_{(N_k - 1)} \geq \frac{1}{2} \right] \leq  2(N_k - 1) e^{-{\epsilon_2}^2}.
		\end{align*}
		This proves the first part of Lemma~\ref{sub:lem:beta-bound}.
		
		Next we prove the second part of Lemma~\ref{sub:lem:beta-bound}. Let $U_{(i)}=F_{(\frac{1}{2},\frac{d-1}{2})}(B_{(i)})$, here $F_{(\frac{1}{2},\frac{d-1}{2})}$ is the CDF of the Beta distribution $\beta(\frac{1}{2},\frac{d-1}{2})$. Note that $\lbrace U_{(i)} \rbrace_{i=1}^{N_k-1}$ are the order statistics of the uniform distribution. 
		
		From Assumption $A1$ we know $F_{(\frac{1}{2},\frac{d-1}{2})}(T_l^2)\leq 1-\frac{d_{\max}}{N_k^{1-\rho}}$ and hence
		\begin{equation}\label{eq:lem3-eq1}
		\mathbb{P}\left[B_{(N_k-d_{\max})}\leq T_l^2 \right] \leq \mathbb{P}\left[U_{(N_k-d_{\max})} \leq  1-\frac{d_{\max}}{N_k^{1-\rho}}\right].
		\end{equation}
		By Chebyshev's inequality and basic properties of the uniform order statistics we have 
		\begin{align}\label{eq:lem3-eq2}
		\mathbb{P}\left[U_{(N_k-d_{\max})} \leq  1-\frac{d_{\max}}{N_k^{1-\rho}}\right] 
		\leq \frac{Var \left[U_{(N_k-d_{\max})} \right]}{\left( \frac{d_{\max}}{N_k}-\frac{d_{\max}}{N_k^{1-\rho}} \right)^2}=\frac{(N_k-d_{\max})}{d_{\max}(N_k+1)(N_k^{\rho}-1)^2}.
		\end{align}
		Combine \eqref{eq:lem3-eq1} and \eqref{eq:lem3-eq2} we know 
		\[
		\mathbb{P}\left[B_{(N_k-d_{\max})} \leq T_l^2 \right] \leq \frac{(N_k-d_{\max})}{d_{\max}(N_k+1)(N_k^{\rho}-1)^2}.
		\]
		This completes the proof. 
	\end{proof}
	Lemma~\ref{sub:lem:decompose-uniform} to Lemma~\ref{sub:lem:informative} are used to prove Theorem~\ref{sub:thm:cluster-dis-noisy}.
	
	\begin{lem}\label{sub:lem:decompose-uniform}
		Let $\mathbf{v}$ be a random vector that uniformly distributed on $\mathbb{S}^{d-1}$. Then we can decompose $\mathbf{v}$ into $\mathbf{v} = \left[ \sqrt{g} s, \sqrt{1-g} \mathbf{u} \right]$, where $g \sim \beta(\frac{1}{2},\frac{d-1}{2})$, $\mathbf{u} \sim U(\mathbb{S}^{d-2})$ and $\mathbb{P}\left[ s = 1 \right ]  = \mathbb{P}\left[ s = -1 \right ] = 0.5$ are three independent random variables. 
	\end{lem}
	\begin{proof}
		It is straightforward to see $\langle \mathbf{v},\mathbf{v} \rangle = [v_1^2,...,v_d^2]$ follows the Dirichlet distribution with parameters $\boldsymbol \alpha = (\frac{1}{2},...,\frac{1}{2}) \in \mathbb{R}^d$.  We can decompose $\langle \mathbf{v},\mathbf{v} \rangle$ into the following concatenation of two random components
		\[
		\left[ v_1^2,...,v_d^2 \right] = \left[ v_1^2, (1-v_1^2)\frac{\langle \mathbf{v}_{-1}, \mathbf{v}_{-1} \rangle}{1-v_1^2} \right].
		\]
		Since Dirichlet distribution is completely neutral \citep{lin2016dirichlet}, we know that $v_1^2$ is independent of $\frac{\langle \mathbf{v}_{-1}, \mathbf{v}_{-1} \rangle}{1-v_1^2}$, where $v_1^2 \sim \beta(\frac{1}{2}, \frac{d-1}{2})$ and $\frac{\langle \mathbf{v}_{-1}, \mathbf{v}_{-1} \rangle}{1-v_1^2} \sim Dir(\boldsymbol \alpha_{-1})$. From symmetry, we can set $\sqrt{g}s := v_1 $ and $\mathbf{u} := \frac{\mathbf{v}_{-1}}{\sqrt{1-v_1^2}}$, where the distributions of $g$, $\mathbf{u}$ and $s$ are specified in the statement of Lemma~\ref{sub:lem:decompose-uniform}. This completes the proof. 

	\end{proof}
	Let $\lbrace \mathbf{a}_i \rbrace_{i=1}^{N_k-1}$ be $(N_k - 1)$ vectors that are uniformly sampled from $\mathbb{S}^{d-1}$. From Lemma~\ref{sub:lem:decompose-uniform}, we know that for any $i = 1,...,N_k -1$, the value of $a_{i1}$ is independent of $\frac{[a_{i2},..,a_{id}]}{\sqrt{1-a_{i1}^2}}$. The following corollary is then a direct result from this fact. 
	\begin{coro}\label{coro:lem:decompose-uniform}
		Let $\lbrace \mathbf{a}_{(i)} \rbrace_{i=1}^{N_k-1}$ be a permutation of $\lbrace \mathbf{a}_i \rbrace_{i=1}^{N_k-1}$ sorted in ascending order of the absolute value of the first coordinate. 
		Then we can write
		\[
		\mathbf{a}_{(i)} = \left[ {a}_{(i)1}, \sqrt{1-{a}_{(i)1}^2} \mathbf{b}_{N_k-i} \right],
		\]
		where $\lbrace  \mathbf{b}_i \rbrace_{i=1}^{N_k-1}$ are i.i.d. uniform samples on $\mathbb{S}^{d-2}$.
	\end{coro}
	
	\begin{lem}\label{sub:lem:permeance}
		\citep[Lemma B.3]{lerman2012robust} Let  $\lbrace  \mathbf{b}_i \rbrace_{i=1}^{d_{max}}$ be i.i.d.~uniform samples from $\mathbb{S}^{d-2}$, $d \geq 3$. Then for any $t \geq 0$
		\begin{align*}
		\inf_{\| \mathbf{u} \|_2=1} \sum_{i=1}^{d_{max}} \left| \langle \mathbf{u}, \mathbf{b}_i \rangle \right| \geq \sqrt{\frac{2}{\pi}} \frac{d_{max}}{\sqrt{d -1}} - 2\sqrt{d_{max}} - t \sqrt{\frac{d_{max}}{d-2}},
		\end{align*}
		with probability at least $1-e^{-t^2/2}$.
	\end{lem}
	
	\begin{coro}\label{sub:coro:permeance}
		Use the same definition of $\lbrace \mathbf{b}_i \rbrace_{i=1}^{d_{max}}$ from Lemma~\ref{sub:lem:permeance}. Then for any $t\geq 0$:
		\begin{equation*}
		\sup_{\norm{\mathbf{u}}_2=1} \sum_{i=1}^{d_{max}} \langle \mathbf{u}, \mathbf{b}_i \rangle  \leq 2\sqrt{d_{max}} + t \sqrt{\frac{d_{max}}{d-2}},
		\end{equation*}
		with probability at least $1-e^{-t^2/2}$.
	\end{coro}
	\begin{proof}
		Note that $\mathbb{E} \left[\langle \mathbf{u}, \mathbf{b}  \rangle \right] = 0$ for any $\mathbf{b} \sim U(\mathbb{S}^{d-2})$ and $\mathbf{u} \in \mathbb{R}^{d-1}$. Therefore by Lemma 6.3 in \cite{ledoux2013probability} we have:
		\[
		\mathbb{E} \left[ \sup_{\norm{\mathbf{u}}_2=1} \sum_{i=1}^{d_{max}} \langle \mathbf{u}, \mathbf{b}_i \rangle \right] \leq 2 \sup_{\norm{\mathbf{u}}_2=1} \left[ \mathbb{E}\norm{ \sum_{i=1}^{d_{max}} \epsilon_i \mathbf{b}_i}^2 \right] = 2\sqrt{d_{max}}.
		\]
		Here $\lbrace \epsilon_i \rbrace_{i=1}^{d_{max}}$ are i.i.d. Rademacher random variables. The lemma is proved by following similar steps after equation (B.11) in \cite{lerman2012robust}. 
	\end{proof}
	
	\begin{lem}\label{sub:lem:informative}
		Suppose Assumption $A3$.
		Write $\mathbf{a}_0 = \left[1,0,...,0 \right] \in \mathbb{R}^{d}$, and use the definitions for $\lbrace \mathbf{a}_i \rbrace_{i=1}^{N_k-1}$ and $\lbrace \mathbf{a}_{(i)} \rbrace_{i=1}^{N_k-1}$ from Corollary~\ref{coro:lem:decompose-uniform}. Let $\mathbf{B} \in \mathbb{R}^{d \times (d_{max} + 1)}$ be a matrix where its first column is $\mathbf{a}_0$ and its $i$-th column ($2 \leq i \leq d_{max} + 1$) is $\mathbf{a}_{(N_k - i + 1)}$. Let the largest $d$  singular values of $\mathbf{B}$ be $s_1\geq s_2\geq\cdots\geq s_d$.  Then we have
		\begin{equation*}
		\mathbb{P}\left[ s_d^2 \geq q_0 \right ] \geq 1-\frac{2}{{N}^{t^2/2}} -  \frac{(N_k-d_{\max})}{d_{\max}(N_k+1)(N_k^{\rho}-1)^2} - 2(N_k - 1)  e^{-{\epsilon_2}^2},
		\end{equation*}
		where $\epsilon_2$ is defined in \eqref{sub:eq:beta_upper}.
	\end{lem}
	\begin{proof}
		From Corollary~\ref{coro:lem:decompose-uniform}, we know $\mathbf{B}$ can be re-written as 
		\begin{equation*}
		\mathbf{B} = \begin{pmatrix}
		1,& {a}_{(N_k-1)1},& ... & {a}_{(N_k-d_{max})1}\\ 
		\mathbf{0},& \sqrt{1 - {a}_{(N_k-1)1}^2}\mathbf{b}_{1}, & ... & \sqrt{1 - {a}_{(N_k-d_{max})1}^2} \mathbf{b}_{d_{max}}
		\end{pmatrix},
		\end{equation*}
		where $\lbrace  \mathbf{b}_i \rbrace_{i=1}^{d_{max}}$ are i.i.d. uniform samples from $\mathbb{S}^{d-2}$. 
		
		Given the dimensions of $\mathbf{B}$, we know $s_d = \inf_{\| \mathbf{x} \|_2=1}  \norm{ \mathbf{B}^T \mathbf{x} }_2$. For convenience, we write \[
		\mathbf{x}' = \frac{1}{\sqrt{1-x_1^2}} \left[ x_2,...,x_d \right],
		\]
		where $\| \mathbf{x}' \|_2 = 1$, $c_i = \langle \mathbf{x}', \mathbf{b}_i \rangle $,  ${a}_{(N_k)1} = 1$. Let $\mathcal{E}_1$ be the event that $\lbrace s_d^2 \geq q_0 \rbrace$, and $\mathcal{E}_2$ be the event that $\lbrace \textit{${a}_{(N_k-i)1}^2 \in \left[T_l^2, \frac{1}{2} \right]$, $\forall i = 1,..,d_{max}$} \rbrace$. From Lemma~\ref{sub:lem:beta-bound} we know
		\begin{equation}\label{sub:eq:lem6-beta-bound}
		\mathbb{P} \left[ \mathcal{E}_2 \right] \geq 1 -  \frac{(N_k-d_{\max})}{d_{\max}(N_k+1)(N_k^{\rho}-1)^2} - 2(N_k - 1)  e^{-{\epsilon_2}^2}.
		\end{equation}
		Conditioning on $\mathcal{E}_2$, we have the following relations
		\begin{align}\label{sub:eq:lem6}
		\norm{ \mathbf{B}^T \mathbf{x} }_2^2= & \left \Vert \begin{pmatrix}
		1,& {a}_{(N_k-1)1},& ... & {a}_{(N_k-d_{max})1}\\ 
		\mathbf{0},& \sqrt{1 - {a}_{(N_k-1)1}^2}\mathbf{b}_{1}, & ... & \sqrt{1 - {a}_{(N_k-d_{max})1}^2} \mathbf{b}_{d_{max}}
		\end{pmatrix}^T \mathbf{x} \right \Vert_2^2 \nonumber \\
		= & \sum_{i=0}^{d_{max}} \left({a}_{(N_k-i)1}x_1 + \sqrt{\left(1-{a}_{(N-i)1}^2 \right) \left( 1-x_1^2 \right)} c_i \right)^2  \nonumber \\
		= & \sum_{i=0}^{d_{max}} {a}_{(N-i)1}^2 x_1^2 + 2\sum_{i=0}^{d_{max}} \sqrt{{a}_{(N-i)1}^2 (1-{a}_{(N_k-i)1}^2 )(1-x_1^2)} c_i x_1 \nonumber \\
		& + \sum_{i=1}^{d_{max}} (1-{a}_{(N_k-i)1}^2 )(1-x_1^2)c_i^2 \nonumber \\
		\geq & T_l^2d_{max} \cdot x_1^2 - \sqrt{(1-x_1^2)x_1^2} \sup_{\norm{u}_2=1} \sum_{i=1}^{d_{max}} \langle \mathbf{u},\mathbf{b}_i \rangle \nonumber \\
		& + \frac{1-x_1^2}{2} \inf_{\norm{u}_2=1} \sum_{i=1}^{d_{max}} \langle \mathbf{u},\mathbf{b}_i \rangle^2.
		\end{align}
		From Lemma~\ref{sub:lem:permeance} and Corollary~\ref{sub:coro:permeance} and conditional on $\mathcal{E}_2$, we have the following inequality 
		\begin{equation}\label{sub:eq:lem6-2}
		\eqref{sub:eq:lem6} \geq (1-x_1^2)C_2 - \sqrt{(1-x_1^2)x_1^2}C_1 +T_l^2d_{max}\cdot x_1^2,
		\end{equation}
		with probability at least $ 1-\frac{2}{{N}^{t^2/2}}$. Since $1-x_1^2 \leq 1$, a lower bound of the RHS of \eqref{sub:eq:lem6-2} is
		\begin{align*}
		\left( T_l^2d_{max}-C_2 \right) x_1^2 - C_1x_1 + C_2 \geq \frac{\left( T_l^2d_{max} - C_2 \right)C_2 - \frac{C_1^2}{4}}{T_l^2d_{max}} \geq q_0, \nonumber
		\end{align*} 
		where the $q_0$ comes from Assumption A3. Finally, note the following fact
		\begin{align}
		\mathbb{P}\left[\mathcal{E}_1 \right] &\geq \mathbb{P}\left[\mathcal{E}_1 \middle|  \mathcal{E}_2\right]+\mathbb{P}\left[ \mathcal{E}_2 \right]-1  \\
		& = 1-\frac{2}{{N}^{t^2/2}} -  \frac{(N_k-d_{\max})}{d_{\max}(N_k+1)(N_k^{\rho}-1)^2} - 2(N_k - 1)  e^{-{\epsilon_2}^2}. \nonumber
		\end{align}
		This completes the proof. 
		
	\end{proof}
	
	
	\noindent
	\begin{proof}\textbf{of Theorem~\ref{sub:thm:sub-preserve-noisy}} Let the event $\mathcal{E}_{1i} = \lbrace \mathbf{Y}_{\mathcal{C}_i} \mbox{ only contains points in same subspace} \rbrace$, then $\mathcal{E}_1 = \cap_{i=1}^n \mathcal{E}_{1i}$ is the event that Algorithm~\ref{sub:algo:sbsc} has sub-cluster preserving property. Let the event $\mathcal{E}_2 = \lbrace \sigma \norm{\mathbf{e}_i^{(k)}}_2<g_2, \forall i,k \rbrace $, where $g_2$ is from Assumption $A2$.  
		Our goal is to find a lower bound on $\mathbb{P}\left[ \mathcal{E}_1 \right]$. 
		
		Note the following fact
		\begin{equation}
		\mathbb{P}\left[\mathcal{E}_1\right] \geq 1-\sum_{i=1}^n \mathbb{P}\left[ \mathcal{E}^\complement_{1i} |  \mathcal{E}_2 \right] + \mathbb{P} \left[ \mathcal{E}_2 \right]-1 = \mathbb{P} \left[ \mathcal{E}_2 \right]-\sum_{i=1}^n \mathbb{P} \left[ {\mathcal{E}}^\complement_{1i} |  \mathcal{E}_2 \right]. 
		\end{equation}
		Therefore, it suffices to find a lower bound on $\mathbb{P} \left[ \mathcal{E}_2 \right]-\sum_{i=1}^n \mathbb{P} \left[ {\mathcal{E}}^\complement_{1i} |  \mathcal{E}_2 \right]$. 
		
		We start by finding a preliminary upper bound on $\mathbb{P}\left[{\mathcal{E}}^\complement_{11} | \mathcal{E}_2 \right]$.
		WLOG assume that $\mathbf{y_1}^{(1)}$ is one of the sampled points, and $\mathbf{Y}_{\mathcal{C}_1}$ is the sub-cluster associated with it. Recall that in Step 2 of Algorithm~\ref{sub:algo:sbsc}, we use $|\langle \mathbf{y}_1^{(1)}, \mathbf{y}_i^{(k)} \rangle|$ to measure the affinity between $\mathbf{y}_1^{(1)}$ and $\mathbf{y}_i^{(k)}$, the nearest $(d_{max} + 1)$ points are then used to construct the sub-cluster associated with $\mathbf{y}_1^{(1)}$. Write $\hat{A}^k = \lbrace |\langle \mathbf{y}_1^{(1)}, \mathbf{y}_i^{(k)} \rangle| \rbrace_{i=1}^{N_k}$, for $\mathcal{E}_{11}$ to happen we need the largest $(d_{\max}+1)$ values among  $\cup_{k=1}^K \hat{A}^k $ to be from the set  $\hat{A}^1$. Mathematically this means 
		\[{\mathcal{E}}^\complement_{11}=\left\{
		\hat{A}_{(N_1 - d_{\max})}^1 \leq \max_{k \neq 1} \max_{i=1,..,N_k} \hat{A}_{i}^k\right\},
		\]
		where $\hat{A}_{i}^k$ is the $i$-th element in $\hat{A}^k$ and $\hat{A}_{(i)}^k$ is the $i$-th smallest element in $\hat{A}^k$.
		
		Recall from \eqref{sub:eq:normalization} that $\mathbf{y_i}^{(k)}=\frac{\mathbf{U_k} \mathbf{a}_i^{(k)}+\sigma\mathbf{e}_i^{(k)}}{\norm{\mathbf{U}_k \mathbf{a}_i^{(k)}+\sigma\mathbf{e}_i^{(k)}}_2}$.  The triangle inequality tells us that
		\[
		\norm{\mathbf{U}_k \mathbf{a}_i^{(k)}}_2-\norm{\sigma \mathbf{e}_i^{(k)}}_2 \leq \norm{\mathbf{U}_k \mathbf{a}_{i}^{(k)}+ \sigma \mathbf{e}_i^{(k)}}_2 \leq \norm{\mathbf{U}_k \mathbf{a}_i^{(k)}}_2+ \norm{\sigma \mathbf{e}_i^{(k)}}_2.
		\]
		Therefore conditional on $\mathcal{E}_2$, we know the normalizing constants ${\norm{\mathbf{U}_k \mathbf{a}_i^{(k)}+\sigma\mathbf{e}_i^{(k)}}_2}$ are bounded in $[1-g_2,1+g_2]$. We can write $A_i^{k} = \norm{\mathbf{y}_{1}^{(1)}}_2 \cdot \norm{\mathbf{y}_{i}^{(k)}}_2 \cdot \hat{A}_i^k $. It is fairly straightforward to get the following relation
		\begin{align}
		\mathbb{P} \left[A_{(N_1 - d_{\max})}^1 \leq \frac{1+g_2}{1-g_2}\max_{k \neq 1}\max_{1\leq i \leq N_k} A_i^k \middle| \mathcal{E}_2 \right] \geq  \mathbb{P}\left[{\mathcal{E}}^\complement_{11} \middle| \mathcal{E}_2 \right]. \label{eq:thm1-pt1}
		\end{align}
		
		Conditioning on $\mathcal{E}_2$ and write $B_i = \langle \mathbf{a}_1^{(1)} , \mathbf{a}_i^{(1)} \rangle ^2$, $ i=2,...,N_1-1$. We have the following inequalities
		\begin{align*}
		A_{(N_1 - d_{\max})}^1 =&  \left|\sqrt{B_{(N_1-d_{\max})}}+ \sigma \langle  \mathbf{U}_1 \mathbf{a}_1^{(1)},\mathbf{e}_i^{(1)} \rangle + \sigma \langle \mathbf{U}_1 \mathbf{a}_i^{(1)},\mathbf{e}_1^{(1)} \rangle + \sigma^2 \langle \mathbf{e}_1^{(1)}, \mathbf{e}_i^{(1)} \rangle \right| \\
		\geq &  \sqrt{B_{(N_1-d_{\max})}} -\sigma \norm{\mathbf{e}_i^{(1)}}_2 - \sigma \norm{\mathbf{e}_1^{(1)}}_2 - \sigma^2 \norm{\mathbf{e}_1^{(1)}}_2 \max_{i \neq 1} \norm{\mathbf{e}_i^{(1)}}_2 \\
		\geq & \sqrt{B_{(N_1-d_{\max})}}-2g_2-g_2^2.
		\end{align*}
		Similarly we have
		\begin{align*}
		\max_{k \neq 1}\max_{1\leq i \leq N_k}  A_i^k   =  &\max_{k \neq 1}\max_{1\leq i \leq N_k} \left | \langle \mathbf{U}_1 \mathbf{a}_1^{(1)} , \mathbf{U}_k \mathbf{a}_i^{(k)} \rangle+ \sigma \langle  \mathbf{U}_1 \mathbf{a}_1^{(1)},\mathbf{e}_i^{(k)} \rangle + \sigma \langle \mathbf{U}_k \mathbf{a}_i^{(k)},\mathbf{e}_1^{(1)} \rangle +\sigma^2 \langle \mathbf{e}_1^{(1)}, \mathbf{e}_i^{(k)} \rangle \right| \nonumber \\
		\leq  & \max_{k \neq 1}\max_{1\leq i \leq N_k} \left |\langle \mathbf{U}_1 \mathbf{a}_1^{(1)} , \mathbf{U}_k \mathbf{a}_i^{(k)} \rangle \right|+ \sigma \max_{k \neq 1}\max_{1\leq i \leq N_k} \norm{\mathbf{e}_i^{(k)}}_2 \\
		& + \sigma \norm{\mathbf{e}_1^{(1)}}_2+ \sigma^2 \norm{\mathbf{e}_1^{(1)}}_2 \max_{k \neq 1} \max_{1\leq i \leq N_k} \norm{\mathbf{e}_i^{(k)}}_2 \\
		\leq  & \max_{k \neq 1}\max_{1\leq i \leq N_k} \left|\langle \mathbf{U}_1 \mathbf{a}_1^{(1)} , \mathbf{U}_k \mathbf{a}_i^{(k)} \rangle \right| + 2g_2 + g_2^2.
		\end{align*}
		Pick $T$ from Assumption $A2$, then the LHS of \eqref{eq:thm1-pt1} has the following upper bound
		\begin{align}\label{eq:thm-1-goal-unnorm}
		\mathbb{P} \left[ T \leq Q \middle| \mathcal{E}_2 \right] +\mathbb{P} \left[B_{(N_1-d_{\max})}\leq T^2 \middle| \mathcal{E}_2 \right], 
		\end{align}
		where
		\begin{align*}
		Q= \left( 1+\frac{1+g_2}{1-g_2} \right) \left( 2g_2 + g_2^2 \right) +\frac{1+g_2}{1-g_2} \max_{k \neq 1}\max_{1\leq i \leq N_k}\left|\langle \mathbf{U}_1 \mathbf{a}_1^{(1)} , \mathbf{U}_k \mathbf{a}_i^{(k)} \rangle \right|.
		\end{align*}
		Now we are going to complete our proof in $3$ steps.
		
		\noindent
		\textbf{Step 1:} For the first term in \eqref{eq:thm-1-goal-unnorm} we have
		\begin{align*} 
		\mathbb{P} \left[T \leq Q \middle| \mathcal{E}_2 \right] = \mathbb{P}\left[g_{1} \leq \max_{k \neq 1}\max_{1\leq i \leq N_k}  \left|\langle \mathbf{U}_1 \mathbf{a}_1^{(1)} , \mathbf{U}_k \mathbf{a}_i^{(k)} \rangle \right| \right]. 
		\end{align*}
		From singular value decomposition we can write
		\[
		\langle \mathbf{U}_1 \mathbf{a}_1^{(1)} , \mathbf{U}_k \mathbf{a}_i^{(k)} \rangle= \mathbf{a}_1^{(1)T} \mathbf{W}_{1k} \mathbf{\Lambda}_{1k} \mathbf{V}_{1k}^T \mathbf{a}_i^{(k)}:=\mathbf{b}_k^T \mathbf{\Lambda}_{1k} \mathbf{V}_{1k}^T \mathbf{a_i}^{(k)},
		\]
		where both $\lbrace \mathbf{b}_k \rbrace_{k=2}^K$ and $\lbrace \mathbf{V}_{1k}^T \mathbf{a}_i^{(k)} \rbrace_{k=2}^K$ are sampled uniformly from $\mathbb{S}^{d-1}$. Therefore
		\begin{align}
		\mathbb{P} \left[g_{1} \leq \max_{k \neq 1}\max_{1\leq i \leq N_k} \left|\langle \mathbf{U}_1 \mathbf{a}_1^{(1)} , \mathbf{U}_k \mathbf{a}_i^{(k)} \rangle \right| \right] \nonumber 
		=&\mathbb{P} \left[ g_{1}^2 \leq  \max_{k \neq 1} \max_{1\leq i \leq N_k} \left( \mathbf{b}_k^T \mathbf{\Lambda}_{1k}\mathbf{V}_{1k}^T \mathbf{a}_i^{(k)} \right)^2 \right] \nonumber \\
		\leq & \sum_{k=2}^K \mathbb{P}\left[g_{1}^2 \leq \max_{1\leq i \leq N_k} \left(\mathbf{b}_k^T \mathbf{\Lambda}_{1k} \mathbf{V}_{1k}^T \mathbf{a}_i^{(k)}\right)^2 \right] \label{eq:thm1-eq1} \\
		\leq & \sum_{k=2}^K \mathbb{P}\left[g_{1}^2 \leq \sum_{i=1}^d \left(\lambda_{i}^{(1k)} b_{ki}\right)^2 \right] \label{eq:thm1-eq2} \\
		\leq & \sum_{k=2}^K \mathbb{P}\left[g_{1}^2 \leq \sum_{i=1}^d \left( \lambda_{i}^{(1)} b_{ki} \right)^2 \right], \label{eq:thm1-eq3}
		\end{align}
		where inequality~\eqref{eq:thm1-eq1} uses the union bound inequality,  \eqref{eq:thm1-eq2} comes from Cauchy-Schwarz inequality, and \eqref{eq:thm1-eq3} uses Definition~\ref{defn:max-affinity}. Since $\lbrace \mathbf{b}_k \rbrace_{k=2}^K \sim U(\mathbb{S}^{d-1})$, we can write \eqref{eq:thm1-eq3} as
		\[
		(K-1)\mathbb{P} \left[ g_{1}^2 \leq \sum_{i=1}^d \left( \lambda_{i}^{(1)} b_{i} \right)^2 \right],
		\]
		where $\mathbf{b}$ is uniformly distributed on $\mathbb{S}^{d-1}$. Now we apply Lemma~\ref{lem:1} directly to the quantity above and get $\mathbb{P}\left[g_{1}^2 \leq \sum_{i=1}^d \left(\lambda_{i}^{(1)} b_{i}\right)^2\right] \leq 2 e^{-\epsilon'^2}$ where
		\begin{align}
		\epsilon' & =\frac{ \sum_{i=1}^d(r_i-s_i)}{(\sqrt{\sum_{i=1}^d r_i^2}+\sqrt{\sum_{i=1}^d s_i^2})+\sqrt{(\sqrt{\sum_{i=1}^d r_i^2}+\sqrt{\sum_{i=1}^d s_i^2})^2+2 s_1 \sum_{i=1}^d(r_i-s_i)}} \nonumber \\
		& = \frac{-(\sqrt{\sum_{i=1}^d r_i^2}+\sqrt{\sum_{i=1}^d s_i^2})+\sqrt{(\sqrt{\sum_{i=1}^d r_i^2}+\sqrt{\sum_{i=1}^d s_i^2})^2+2 s_1 \sum_{i=1}^d(r_i-s_i)}}{2s_1} \nonumber  \\
		& \geq \frac{-(\sqrt{\sum_{i=1}^d r_i^2}+\sqrt{\sum_{i=1}^d s_i^2})+\sqrt{(\sqrt{\sum_{i=1}^d r_i^2}+\sqrt{\sum_{i=1}^d s_i^2})^2+2 \sum_{i=1}^d(r_i-s_i)}}{2} \label{eq:thm1-eq4}\\
		& = \frac{\sum_{i=1}^d(r_i-s_i)}{(\sqrt{\sum_{i=1}^d r_i^2}+\sqrt{\sum_{i=1}^d s_i^2})+\sqrt{(\sqrt{\sum_{i=1}^d r_i^2}+\sqrt{\sum_{i=1}^d s_i^2})^2+2 \sum_{i=1}^d(r_i-s_i)}} \nonumber \\
		& \geq \frac{\sum_{i=1}^d(r_i-s_i)}{2 \sqrt{\sum_{i=1}^d r_i^2}+\sqrt{4 \sum_{i=1}^d r_i^2+2 \sum_{i=1}^d r_i}} \geq  \epsilon_1 \nonumber.
		\end{align}
		Here $\epsilon_1$ is defined in \eqref{sub:eq:thm1-epsilon}, $r_i$ and $s_i$ are defined in Lemma~\ref{lem:1}, and \eqref{eq:thm1-eq4} comes from the following fact for positive constants $a$, $b$ and $s \in (0,1)$
		\begin{align*}
		& \frac{-a+\sqrt{a^2+2sb}}{2s} \geq \frac{-a+\sqrt{a^2+2b}}{2}.
		\end{align*}
		Therefore we have
		\[
		\mathbb{P} \left[g_{1} \leq \max_{k \neq 1}\max_{1\leq i \leq n_k} \left|\langle \mathbf{U}_1 \mathbf{a}_1^{(1)} , \mathbf{U}_k \mathbf{a}_i^{(k)} \rangle\right|\right] \leq 2(K-1) e^{-\epsilon_1^2}.
		\]
		\textbf{Step 2:} For the second term of \eqref{eq:thm-1-goal-unnorm}, we just need to use Lemma~\ref{sub:lem:beta-bound}. Note that for fixed $\mathbf{a}_1^{(1)}$, one can show that $B_i=\langle \mathbf{a}_1^{(1)},\mathbf{a}_i^{(1)} \rangle^2$ can be treated as a sample from a Beta distribution with parameters $(\frac{1}{2},\frac{d-1}{2})$. From Lemma~\ref{sub:lem:beta-bound} and Assumption $A2$ we have
		\begin{equation*}
		\mathbb{P} \left[B_{(N_1-d_{\max})} \leq T^2 \middle| \mathcal{E}_2 \right] \leq \frac{(N_1-d_{\max})}{d_{\max}(N_1+1)(N_1^{\rho}-1)^2}.
		\end{equation*}
		Combine the results above we know
		\begin{align}\label{sub:eq:thm1-beta-bound}
		\mathbb{P} \left[{\mathcal{E}}^\complement_{11} \middle| \mathcal{E}_2 \right] \leq 2(K-1) e^{-{\epsilon_1}^2} + \frac{(N_1-d_{\max})}{d_{\max}(N_1+1)(N_1^{\rho}-1)^2}.
		\end{align}
		\noindent
		\textbf{Step 3:} Now we are going to find the lower bound on $\mathbb{P}[\mathcal{E}_2]$. Let $\mathbf{e}$ be an independent copy of $\mathbf{e}_1^{(1)}$, note that $\norm{\mathbf{e}}_2^2/{D} \sim F_{D,d}$. From Corollary~\ref{coro:f_upper_bound} we have
		\begin{align*}
		\mathbb{P}\left[g_2 \leq \sigma||\mathbf{e}||_2 \right] & =\mathbb{P}\left[\frac{ g_2^2}{D \sigma^2} \leq \frac{||\mathbf{e}||_2^2}{D}\right] \leq 2 e^{-t^2},
		\end{align*}
		where $t$ can be calculated from Corollary~\ref{coro:f_upper_bound}. Using Assumption $A2$ we have 
		\begin{align*}
		t > \frac{D \left( \frac{g_2^2}{D\sigma^2}-1 \right) }{2 \left( \sqrt{D}+\frac{g_2^2}{\sigma^2 \sqrt{d}}+\sqrt{d} \right)} = \frac{\sqrt{d}}{2}\left( 1 - \frac{1 + \frac{d}{D} + \sqrt{\frac{d}{D}}}{1 + \frac{d}{D}+\frac{g_2^2}{D\sigma^2}} \right)\geq \left( 1 + \frac{\eta}{2+\eta} \right) \sqrt{\log{N}}.
		\end{align*}
		Therefore we have $\mathbb{P}\left[g_2 \leq \sigma ||\mathbf{e}||_2\right] \leq \frac{2}{N^{\left(1 + \frac{\eta}{2+\eta}\right)^2}}$. Now we note that
		\begin{align}
		\mathbb{P}\left[g_2 > \sigma\max_{k=1,...,K}\max_{1\leq i \leq N_k} \norm{\mathbf{e}_i^{(k)}}_2\right] &= \Pi_{i=1}^{N} \left(1-\mathbb{P}\left[g_2 \leq  \sigma \norm{\mathbf{e}_i^{(k)}}_2\right] \right) \nonumber \\
		& \geq  (1-2 e^{-t^2})^N \nonumber  \geq 1 - \frac{2N}{N^{\left(1 + \frac{\eta}{2+\eta}\right)^2}},  
		\end{align}
		where the last inequality comes from the Bernoulli's inequality. Therefore 
		\begin{align}\label{sub:eq:thm1-noise-bound}
		\mathbb{P}\left[\mathcal{E}_2\right]  \geq 1-\frac{2N}{N^{\left(1 + \frac{\eta}{2+\eta}\right)^2}}.
		\end{align}
		Finally, the above arguments hold for any $\mathbf{y}_i^{(k)}$. Putting \eqref{sub:eq:thm1-beta-bound} and \eqref{sub:eq:thm1-noise-bound} together and applying the union bound inequality yields the result
		\begin{align}\label{eq:thm1-final}
		\mathbb{P}[\mathcal{E}_1] \geq 1- \sum_{k=1}^K  \frac{n_k(N_k-d_{\max})}{d_{\max}(N_k+1)(N_k^{\rho}-1)^2}-2(K-1)n e^{-\epsilon_1^2}-\frac{2N}{N^{\left(1 + \frac{\eta}{2+\eta}\right)^2}}.
		\end{align}
	\end{proof}
	To prove Theorem~\ref{sub:thm:cluster-dis-noisy}, we will use the following equation
	\begin{equation}\label{sub:eq:ridge}
	(\mathbf{W}^T\mathbf{W}+\lambda \mathbf{I}_{d_2})^{-1} \mathbf{W}^T = \mathbf{W}^T (\mathbf{W} \mathbf{W}^T + \lambda \mathbf{I}_{d_1})^{-1},
	\end{equation}
	where $\mathbf{W} \in \mathbb{R}^{d_1 \times d_2}$ and $\lambda$ is a positive constant \citep[Chapter~4]{murphy2012machine}. Throughout the proof of Theorem~\ref{sub:thm:cluster-dis-noisy}, the subscript of identity matrix $\mathbf{I}$ will be omitted as its dimension is clear from the context. 
	\vspace{0.3cm}
	\begin{proof}\textbf{of Theorem~\ref{sub:thm:cluster-dis-noisy}}  Similar to the proof of Theorem~\ref{sub:thm:sub-preserve-noisy}, let $\mathcal{E}_1$ be the event that correct neighborhood property holds for all $\lbrace \mathbf{Y}_{\mathcal{C}_j} \rbrace_{j=1}^n$, let $\mathcal{E}_2$ be the event $\lbrace \sigma \| \mathbf{e}_i^{(k)} \|_2 < g, \forall i,k \rbrace$ ($g$ is from Assumption $A4$), $\mathcal{E}_3$ is the event that the smallest singular value of $\mathbf{B}\mathbf{B}^T$ is at least $q_0$, $\forall i=1,...,n$, and $\mathcal{E}_4$ is the event that the sub-cluster preserving property is satisfied. 
		
		Define $\mathcal{I} = \lbrace (i,j):$ the $i$-th and the $j$-th sampled points belong to different clusters, $1\leq i < j \leq n\rbrace$, and $\mathcal{J} = \lbrace (i,j):$ the $i$-th and the $j$-th sampled points belong to the same cluster, $1\leq i < j \leq n\rbrace$. Conditional on $\mathcal{E}_4$, we know that $\mathbf{Y}_{\mathcal{C}_i}$ and $\mathbf{Y}_{\mathcal{C}_j}$ belong to different clusters if $(i,j) \in \mathcal{I}$, and belong to the same cluster if $(i,j) \in \mathcal{J}$. 
		
		We will show that conditioning on $\mathcal{E}_2$, $\mathcal{E}_3$ and $\mathcal{E}_4$, there exists a constant $l$ such that 
		\begin{align*}
		\mathbb{P}\left[\mathcal{E}_1 \middle| \mathcal{E}_2, \mathcal{E}_3,\mathcal{E}_4 \right] \geq \mathbb{P}\left[d(\mathbf{Y}_{\mathcal{C}_i},\mathbf{Y}_{\mathcal{C}_j})_{\forall (i,j) \in \mathcal{I}} > l \middle| \mathcal{E}_2,\mathcal{E}_3,\mathcal{E}_4  \right] \geq 1 - \sum_{\forall (i,j) \in \mathcal{I}} \mathbb{P}\left[d(\mathbf{Y}_{\mathcal{C}_i},\mathbf{Y}_{\mathcal{C}_j}) \leq l \middle| \mathcal{E}_2,\mathcal{E}_3,\mathcal{E}_4  \right].
		\end{align*}
		Then we obtain an upper bound on $\mathbb{P}[d(\mathbf{Y}_{\mathcal{C}_i},\mathbf{Y}_{\mathcal{C}_k}) \leq l \mid \mathcal{E}_2,\mathcal{E}_3, \mathcal{E}_4]$, $\forall (i,k) \in \mathcal{I}$. The theorem will follow by using the union bound inequality.
		
		WLOG assume that $\mathbf{Y}_{\mathcal{C}_1}$ and $\mathbf{Y}_{\mathcal{C}_2}$ belong to $\mathcal{S}_1$, and $\mathbf{Y}_{\mathcal{C}_3}$ belongs to $\mathcal{S}_2$. 
		The distance function $d(\mathbf{Y}_{\mathcal{C}_1},\mathbf{Y}_{\mathcal{C}_2})$ can be explicitly written as 
		\begin{equation}\label{eq:thm2-eq1}
		\norm{\mathbf{Y}_{\mathcal{C}_1}-\mathbf{Y}_{\mathcal{C}_2}(\mathbf{Y}_{\mathcal{C}_2}^T \mathbf{Y}_{\mathcal{C}_2}+\lambda \mathbf{I})^{-1}\mathbf{Y}_{\mathcal{C}_2}^T\mathbf{Y}_{\mathcal{C}_1}}_F + \norm{\mathbf{Y}_{\mathcal{C}_2}-\mathbf{Y}_{\mathcal{C}_1}(\mathbf{Y}_{\mathcal{C}_1}^T \mathbf{Y}_{\mathcal{C}_1}+\lambda \mathbf{I})^{-1}\mathbf{Y}_{\mathcal{C}_1}^T\mathbf{Y}_{\mathcal{C}_2}}_F.
		\end{equation}
		Conditional on $\mathcal{E}_4$, we can write $\mathbf{Y}_{\mathcal{C}_1}=\mathbf{U}_1 \hat{\mathbf{B}}_1+\hat{\mathbf{E}}_1$, where $\|  [\mathbf{U}_1 \hat{\mathbf{B}}_{1}]_j + [\hat{\mathbf{E}}_{1}]_j \|_2 = 1$. Let $\mathbf{B}_1$ and $\mathbf{E}_1$ be the ``un-normalized'' version of $\hat{\mathbf{B}}_1$ and $\hat{\mathbf{E}}_1$ respectively. Here each column of $\mathbf{B}_1$ is a sample from the uniform distribution on $\mathbb{S}^{d-1}$. We have the following relation
		\begin{align*}
		[\hat{\mathbf{B}}_{1}]_j = \frac{[\mathbf{B}_{1}]_j}{\norm{[\mathbf{U}_1\mathbf{B}_{1}]_j + [\mathbf{E}_{1}]_j}_2}, \  [\hat{\mathbf{E}}_{1}]_j = \frac{[\mathbf{E}_{1}]_j}{\norm{[\mathbf{U}_1\mathbf{B}_{1}]_j + [\mathbf{E}_{1}]_j}_2}, \  j = 1,...,d_{max} + 1.
		\end{align*}
		Similarly we can write $\mathbf{Y}_{\mathcal{C}_2}=\mathbf{U}_1 \hat{\mathbf{B}}_2+\hat{\mathbf{E}}_2$ and $\mathbf{Y}_{\mathcal{C}_3}=\mathbf{U}_2 \hat{\mathbf{B}}_3+\hat{\mathbf{E}}_3$. Using equation \eqref{sub:eq:ridge}, the first term in \eqref{eq:thm2-eq1} can be rewritten as 
		\begin{align}
		&\norm{\mathbf{Y}_{\mathcal{C}_1}-\mathbf{Y}_{\mathcal{C}_2}(\mathbf{Y}_{\mathcal{C}_2}^T \mathbf{Y}_{\mathcal{C}_2}+\lambda \mathbf{I})^{-1}\mathbf{Y}_{\mathcal{C}_2}^T\mathbf{Y}_{\mathcal{C}_1}}_F  \nonumber \\
		=&\norm{\mathbf{Y}_{\mathcal{C}_1}-(\mathbf{Y}_{\mathcal{C}_2} \mathbf{Y}_{\mathcal{C}_2}^T+\lambda \mathbf{I}-\lambda \mathbf{I})(\mathbf{Y}_{\mathcal{C}_2}\mathbf{Y}_{\mathcal{C}_2}^T+\lambda \mathbf{I})^{-1}\mathbf{Y}_{\mathcal{C}_1}}_F\nonumber \\
		=&\lambda \norm{(\mathbf{Y}_{\mathcal{C}_2}\mathbf{Y}_{\mathcal{C}_2}^T+\lambda \mathbf{I})^{-1}\mathbf{Y}_{\mathcal{C}_1}}_F\nonumber  \\
		<&\lambda\norm{[(\mathbf{Y}_{\mathcal{C}_2}\mathbf{Y}_{\mathcal{C}_2}^T+\lambda \mathbf{I})^{-1}-(\mathbf{U}_1\hat{\mathbf{B}}_2 \hat{\mathbf{B}}_2^T\mathbf{U}_1^T+\lambda \mathbf{I})^{-1}]||_F||\mathbf{Y}_{\mathcal{C}_1}}_F\nonumber  \\
		&+\lambda \norm{(\mathbf{U}_1\hat{\mathbf{B}}_2 \hat{\mathbf{B}}_2^T\mathbf{U}_1^T+\lambda \mathbf{I})^{-1}\mathbf{Y}_{\mathcal{C}_1}}_F \nonumber \\
		<& \lambda\norm{(\mathbf{Y}_{\mathcal{C}_2}\mathbf{Y}_{\mathcal{C}_2}^T+\lambda \mathbf{I})^{-1}-(\mathbf{U}_1\hat{\mathbf{B}}_2 \hat{\mathbf{B}}_2^T\mathbf{U}_1^T+\lambda \mathbf{I})^{-1}}_F \sqrt{d_{\max}+1} \nonumber \\
		& + \lambda \norm{(\mathbf{U}_1\hat{\mathbf{B}}_2 \hat{\mathbf{B}}_2^T\mathbf{U}_1^T+\lambda \mathbf{I})^{-1}\mathbf{U}_1 \mathbf{\hat{B}}_1 }_F \nonumber \\
		& + \lambda \norm{(\mathbf{U}_1\hat{\mathbf{B}}_2 \hat{\mathbf{B}}_2^T\mathbf{U}_1^T+\lambda \mathbf{I})^{-1}}_F \norm{\mathbf{\hat{E}}_1}_F
		\label{eq:thm-2-pt3}.
		\end{align}
		
		Now we are going to complete our proof in $3$ steps. Unless specified otherwise, the following Step $1$ to Step $3$ are derived conditioning on $\mathcal{E}_2$, $\mathcal{E}_3$ and $\mathcal{E}_4$.
		
		\noindent
		\textbf{Step 1}: We can rewrite the first term in \eqref{eq:thm-2-pt3} as the following  term
		\[
		\lambda \norm{(\mathbf{G}_2+\mathbf{H})^{-1}-\mathbf{H}^{-1}}_F  \sqrt{d_{\max} + 1},
		\]
		where $\mathbf{H}=\mathbf{U}_1\hat{\mathbf{B}}_2 \hat{\mathbf{B}}_2^T\mathbf{U}_1^T+\lambda \mathbf{I}$, and $\mathbf{G}_2=\mathbf{Y}_{\mathcal{C}_2}\mathbf{Y}_{\mathcal{C}_2}^T-\mathbf{U}_1\hat{\mathbf{B}}_2 \hat{\mathbf{B}}_2^T\mathbf{U}_1^T=\hat{\mathbf{E}}_2 \hat{\mathbf{B}}_2^T \mathbf{U}_1^T+\mathbf{U}_1\hat{\mathbf{B}}_2\hat{\mathbf{E}}_2^T+\hat{\mathbf{E}}_2\hat{\mathbf{E}}_2^T$. Note that the normalizing constant of each column of $\lbrace \hat{\mathbf{E}}_{i} \rbrace_{i=1}^n$ are bounded in $[1-g,1+g]$. We then have the following relations
		\begin{align}\label{sub:eq:thm2-g2-bound}
		\norm{\mathbf{G}_2}_F &\leq \norm{\hat{\mathbf{E}}_2}_F \norm{\hat{\mathbf{B}}_2^T \mathbf{U}_1^T}_F+\norm{\mathbf{U}_1\hat{\mathbf{B}}_2+\hat{\mathbf{E}}_2}_F \norm{\hat{\mathbf{E}}_2^T}_F \nonumber \\
		& \leq \frac{(2g - g^2)(d_{max}+1)}{(1-g)^2}.
		\end{align}
		The above analysis used triangle inequalities and the bounds of normalizing constants. 
		
		Using the fact that
		\begin{equation*}
		\| \mathbf{H}^{-1} \|_F < \sqrt{\frac{d(1+g)^4}{q_0^2} + \frac{D-d}{\lambda^2}}
		\end{equation*}
		and inequality \eqref{sub:eq:thm2-g2-bound}, we have the following inequalities
		\begin{align*}
		\norm{\mathbf{H}^{-1} \mathbf{G}_2}_F \leq \norm{\mathbf{H}^{-1}}_F \norm{\mathbf{G}_2}_F  = \frac{(2g-g^2)\left(d_{max}+1\right)}{2(1-g)} \cdot \sqrt{\frac{d(1+g)^4}{q_0^2} + \frac{D-d}{\lambda^2}}=:f(d) < \frac{1}{2}.
		\end{align*}
		Therefore $\lim_{m \rightarrow \infty} (\mathbf{H}^{-1}\mathbf{G}_2)^m=\mathbf{0}$. From Theorem $4.29$ in \cite{schott2016matrix} we know $(\mathbf{I}+\mathbf{H}^{-1}\mathbf{G}_2)^{-1}=\sum_{j=0}^{\infty}(\mathbf{H}^{-1}\mathbf{G}_2)^j$ and
		\begin{align*}
		\norm{(\mathbf{G}_2+\mathbf{H})^{-1}-\mathbf{H}^{-1}}_F&=\norm{\mathbf{H}^{-1}\mathbf{G}(\mathbf{I}+\mathbf{H}^{-1}\mathbf{G}_2)^{-1}\mathbf{H}^{-1}}_F \\
		& \leq \norm{\sum_{j=1}^{\infty}(\mathbf{H}^{-1}\mathbf{G}_2)^j}_F \norm{\mathbf{H}^{-1}}_F\\
		& <\frac{f(d)}{1-f(d)}\sqrt{\frac{d(1+g)^4}{q_0^2} + \frac{D-d}{\lambda^2}}.
		\end{align*}
		We then have for the first term in $\eqref{eq:thm-2-pt3}$ 
		\begin{align*}
		\lambda \norm{(\mathbf{G}_2+\mathbf{H})^{-1}-\mathbf{H}^{-1}}_F \sqrt{d_{\max}+1} & <  \frac{f(d)\sqrt{d_{max}+1}}{1-f(d)} \cdot \sqrt{\frac{d(1+g)^4\lambda^2}{q_0^2} + D-d}.
		\end{align*}
		For the second term in \eqref{eq:thm-2-pt3} we have
		\begin{align*}
		\lambda \norm{ \left( \mathbf{U}_1\hat{\mathbf{B}}_2 \hat{\mathbf{B}}_2^T\mathbf{U}_1^T+\lambda \mathbf{I} \right)^{-1} \hat{\mathbf{B}}_1}_F  &=  \norm{\hat{\mathbf{B}}_1-\hat{\mathbf{B}}_2 \hat{\mathbf{B}}_2^T(\hat{\mathbf{B}}_2 \hat{\mathbf{B}}_2^T +\lambda \mathbf{I})^{-1}\hat{\mathbf{B}}_1}_F \\
		&\leq \norm{\mathbf{I}-\hat{\mathbf{B}}_2 \hat{\mathbf{B}}_2^T(\hat{\mathbf{B}}_2 \hat{\mathbf{B}}_2^T +\lambda \mathbf{I})^{-1}}_F \norm{\hat{\mathbf{B}}_1}_F  \leq \frac{\lambda (1+g)^2 \sqrt{d(d_{max}+1)}}{q_0(1-g)}.
		\end{align*}
		For the third term in \eqref{eq:thm-2-pt3} we have
		\begin{align*}
		\lambda \norm{(\mathbf{U}_1\hat{\mathbf{B}}_2 \hat{\mathbf{B}}_2^T\mathbf{U}_1^T+\lambda \mathbf{I})^{-1}}_F \norm{\mathbf{\hat{E}}_1}_F \leq g\sqrt{\frac{d(1+g)^4\lambda^2}{q_0^2} + D-d}.
		\end{align*}
		Hence by our assumption, equation \eqref{eq:thm-2-pt3} can be upper bounded by the following term
		\[
		\frac{3\lambda (1+g)^2 \sqrt{d(d_{max}+1)}}{q_0(1-g)},
		\]
		which is deterministic and does not depend on the choices of $\lbrace \mathbf{B}_i \rbrace_{i=1}^n$ and $\lbrace \mathbf{U}_k \rbrace_{k=1}^K$. The distance function in \eqref{eq:thm2-eq1} has two parts which are symmetric, therefore we set 
		\[
		l :=\frac{6\lambda (1+g)^2 \sqrt{d(d_{max}+1)}}{q_0(1-g)}> d(\mathbf{Y}_{\mathcal{C}_i},\mathbf{Y}_{\mathcal{C}_j})_{(i,j) \in \mathcal{J}}.
		\]
		
		\noindent 
		\textbf{Step 2:} Now we consider $\mathbb{P}\left[ d \left(\mathbf{Y}_{\mathcal{C}_1},\mathbf{Y}_{\mathcal{C}_3}\right) \leq l \middle | \mathcal{E}_2, \mathcal{E}_3, \mathcal{E}_4 \right]$. We explicitly write $d(\mathbf{Y}_{\mathcal{C}_1},\mathbf{Y}_{\mathcal{C}_3})$ as 
		\begin{align}
		\norm{\mathbf{Y}_{\mathcal{C}_1}-\mathbf{Y}_{\mathcal{C}_3}(\mathbf{Y}_{\mathcal{C}_3}^T \mathbf{Y}_{\mathcal{C}_3}+\lambda \mathbf{I})^{-1}\mathbf{Y}_{\mathcal{C}_3}^T\mathbf{Y}_{\mathcal{C}_1}}_F+ \norm{\mathbf{Y}_{\mathcal{C}_3}-\mathbf{Y}_{\mathcal{C}_1}(\mathbf{Y}_{\mathcal{C}_1}^T \mathbf{Y}_{\mathcal{C}_1}+\lambda \mathbf{I})^{-1}\mathbf{Y}_{\mathcal{C}_1}^T\mathbf{Y}_{\mathcal{C}_3}}_F. \label{eq:thm2-eq2}
		\end{align}
		Note the following relation 
		\begin{align} \label{eq:thm2-prob-0}
		\mathbb{P} \left[d(\mathbf{Y}_{\mathcal{C}_1},\mathbf{Y}_{\mathcal{C}_3})  \leq l | \mathcal{E}_2, \mathcal{E}_3 \right] \leq & \mathbb{P} \left[\norm{\mathbf{Y}_{\mathcal{C}_1}-\mathbf{Y}_{\mathcal{C}_3}(\mathbf{Y}_{\mathcal{C}_3}^T \mathbf{Y}_{\mathcal{C}_3}+\lambda \mathbf{I})^{-1}\mathbf{Y}_{\mathcal{C}_3}^T\mathbf{Y}_{\mathcal{C}_1}}_F \leq \frac{l}{2} \middle | \mathcal{E}_2, \mathcal{E}_3,\mathcal{E}_4 \right] \nonumber  \\
		& + \mathbb{P}\left[\norm{\mathbf{Y}_{\mathcal{C}_3}-\mathbf{Y}_{\mathcal{C}_1}(\mathbf{Y}_{\mathcal{C}_1}^T \mathbf{Y}_{\mathcal{C}_1}+\lambda \mathbf{I})^{-1}\mathbf{Y}_{\mathcal{C}_1}^T\mathbf{Y}_{\mathcal{C}_3}}_F \leq \frac{l}{2} \middle | \mathcal{E}_2, \mathcal{E}_3,\mathcal{E}_4 \right]. 
		\end{align}
		To bound the first term in \eqref{eq:thm2-eq2}, the following facts come from the triangle inequality
		\begin{align*}
		&\norm{\mathbf{Y}_{\mathcal{C}_1}-\mathbf{Y}_{\mathcal{C}_3} \left(\mathbf{Y}_{\mathcal{C}_3}^T\mathbf{Y}_{\mathcal{C}_3}+\lambda \mathbf{I} \right)^{-1}\mathbf{Y}_{\mathcal{C}_3}^T \mathbf{Y}_{\mathcal{C}_1}}_F \nonumber \\
		=&\lambda \norm{\left( \mathbf{Y}_{\mathcal{C}_3}\mathbf{Y}_{\mathcal{C}_3}^T+\lambda \mathbf{I} \right)^{-1}\mathbf{Y}_{\mathcal{C}_1}}_F\nonumber  \\
		>& \lambda \norm{\left(\mathbf{U}_2\hat{\mathbf{B}}_3 \hat{\mathbf{B}}_3^T\mathbf{U}_2^T+\lambda \mathbf{I}\right)^{-1} \mathbf{U}_1\hat{\mathbf{B}}_1}_F -\lambda \norm{\left( \mathbf{U}_2\hat{\mathbf{B}}_3 \hat{\mathbf{B}}_3^T\mathbf{U}_2^T+\lambda \mathbf{I} \right)^{-1}}_F \norm{\hat{\mathbf{E}}_1}_F \nonumber \\
		-&\lambda\norm{\left[\left( \mathbf{Y}_{\mathcal{C}_3}\mathbf{Y}_{\mathcal{C}_3}^T+\lambda \mathbf{I} \right) ^{-1}- \left( \mathbf{U}_2\hat{\mathbf{B}}_3 \hat{\mathbf{B}}_3^T\mathbf{U}_2^T+\lambda \mathbf{I} \right)^{-1}\right]}_F \sqrt{d_{\max}+1}.
		\end{align*}
		The last two terms of the line above are upper bounded by $\frac{\lambda (1+g)^2 \sqrt{d(d_{max}+1)}}{q_0(1-g)}$ as before, and the first term can be bounded by the following relations
		\begin{align}
		& \lambda \norm{\left( \mathbf{U}_2\hat{\mathbf{B}}_3 \hat{\mathbf{B}}_3^T\mathbf{U}_2^T+\lambda \mathbf{I} \right)^{-1} \mathbf{U}_1\hat{\mathbf{B}}_1}_F \nonumber \\
		\geq & \norm{\mathbf{U}_1 \hat{\mathbf{B}}_1-\mathbf{U}_2 \mathbf{U}_2^T \mathbf{U}_1 \hat{\mathbf{B}}_1}_F - \lambda \norm{\mathbf{U}_2 \left( \hat{\mathbf{B}}_3 \hat{\mathbf{B}}_3^T + \lambda \mathbf{I}  \right)^{-1} \mathbf{U}_2^T \mathbf{U}_1 \hat{\mathbf{B}}_1}_F \nonumber \\
		> & \norm{\mathbf{U}_1 \hat{\mathbf{B}}_1-\mathbf{U}_2 \mathbf{U}_2^T \mathbf{U}_1 \hat{\mathbf{B}}_1}_F- \frac{\lambda (1+g)^2 \sqrt{d(d_{max}+1)}}{q_0(1-g)} \label{eq:thm2-eq3},
		\end{align}
		where inequality \eqref{eq:thm2-eq3} comes from the following relations 
		\begin{align*}
		\lambda \norm{\mathbf{U}_2 \left( \hat{\mathbf{B}}_3 \hat{\mathbf{B}}_3^T + \lambda \mathbf{I}  \right)^{-1} \mathbf{U}_2^T \mathbf{U}_1 \hat{\mathbf{B}}_1}_F  & \leq \lambda \norm{\left( \hat{\mathbf{B}}_3 \hat{\mathbf{B}}_3^T + \lambda \mathbf{I}  \right)^{-1}}_F  \norm{\mathbf{U}_1\hat{\mathbf{B}}_1}_F\\ & \leq\lambda \frac{\sqrt{d}(1+g)^2}{q_0}\frac{\sqrt{d_{max}+1}}{1-g} = \frac{\lambda (1+g)^2 \sqrt{d(d_{max}+1)}}{q_0(1-g)}.
		\end{align*} 
		
		For the first term in  \eqref{eq:thm2-eq3} we have
		\begin{align*}
		\norm{\mathbf{U}_1 \hat{\mathbf{B}}_1-\mathbf{U}_2 \mathbf{U}_2^T \mathbf{U}_1 \hat{\mathbf{B}}_1}_F &= \sqrt{Tr\left[\hat{\mathbf{B}}_1^T\hat{\mathbf{B}}_1-\hat{\mathbf{B}}_1^T\mathbf{U}_1^T\mathbf{U}_2 \mathbf{U}_2^T \mathbf{U}_1 \hat{\mathbf{B}}_1 \right]} \\
		& = \norm{\sqrt{\mathbf{I}-\mathbf{\Lambda}_{12}^2}\tilde{\mathbf{B}}_1 \mathbf{W}}_F \geq \frac{\norm{\sqrt{\mathbf{I}-\mathbf{\Lambda}_{12}^2}\tilde{\mathbf{B}}_1}_F }{1+g},
		\end{align*}
		where $\mathbf{W}$ is the diagonal matrix such that $W_{jj} = \frac{1} {\norm{[\hat{\mathbf{B}}_{1}]_{j}}_2}$ ($[\hat{\mathbf{B}}_{1}]_j$ is the $j$-th column of $\hat{\mathbf{B}}_{1}$, $j=1,..,d_{max}+1$), $\tilde{\mathbf{B}}_1 =\mathbf{V} \mathbf{B}_1$ is a orthogonal transformation of $\mathbf{B}_1$ (here $\mathbf{V}$ is the right orthogonal matrix in the svd of $\mathbf{U}_2^T \mathbf{U}_1$), and $\mathbf{\Lambda}_{12}$ is the diagonal matrix such that $\left[ \mathbf{\Lambda}_{12} \right]_{ii} = \lambda_i^{(12)}$, $i=1,...,d$. Therefore, eventually the first term at the RHS of \eqref{eq:thm2-prob-0} can be upper bounded by 
		\begin{align*}
		&\mathbb{P} \left[\norm{\sqrt{\mathbf{I}-\mathbf{\Lambda}_{12}^2}\tilde{\mathbf{B}}_1}_F \leq  \frac{6\lambda (1+g)^2 \sqrt{d(d_{max}+1)}}{q_0(1-g)} \right]. 
		\end{align*}
		Using Assumption A4, Lemma~\ref{lem:1} and arguments similar to the proof of Theorem~\ref{sub:thm:sub-preserve-noisy}, we know the quantity above is upper bounded by 
		\begin{align*}
		&\mathbb{P} \left[\norm{\sqrt{\mathbf{I}-\mathbf{\Lambda}_{12}^2}\mathbf{v}}_F \leq \sqrt{1 - T_l^2} \right] \leq 2e^{-{\epsilon_1}^2}, 
		\end{align*}
		where $\epsilon_1$ is defined in \eqref{sub:eq:thm1-epsilon} with $g_1$ replaced by $T_l$, and $\mathbf{v}$ is the first column of $\tilde{\mathbf{B}}_1$. Using analogous manipulations we obtain similar results for the second term in \eqref{eq:thm2-prob-0}. Therefore
		$\mathbb{P}\left[d(\mathbf{Y}_{\mathcal{C}_1},\mathbf{Y}_{\mathcal{C}_3})  \leq l | \mathcal{E}_2, \mathcal{E}_3, \mathcal{E}_4 \right]  \leq 4e^{-{\epsilon_1}^2}$. 
		
		\noindent
		\textbf{Step 3:} Now we are going to lower bound $\mathbb{P}\left[\mathcal{E}_2, \mathcal{E}_3, \mathcal{E}_4 \right]$ from the fact $\mathbb{P}\left[\mathcal{E}_2, \mathcal{E}_3, \mathcal{E}_4 \right] \geq 1 - \mathbb{P}[\mathcal{E}^{\complement}_2] - \mathbb{P}[\mathcal{E}^{\complement}_3] - \mathbb{P}[\mathcal{E}^{\complement}_4]$. 
		
		Just as in the proof of Theorem~\ref{sub:thm:sub-preserve-noisy} we have $
		\mathbb{P} \left[\sigma \mathbf{e}_{i}^{(k)} \geq g \right] \leq 2e^{-t^2}$,
		where
		\[
		t = \frac{D \left( \frac{g^2}{D\sigma^2}-1 \right)}{2 \left( \sqrt{D}+\frac{Dg^2}{\sigma^2\sqrt{d}}+\sqrt{d} \right)}.
		\]
		From Assumption $A4$ we know $2e^{-t^2} \leq \frac{2}{N^{\left(1 + \frac{\eta}{2+\eta}\right)^2}}$. Using union bound inequality we have
		\begin{align}\label{sub:eq:thm2-e2}
		\mathbb{P} \left[\mathcal{E}^{\complement}_2 \right] \leq \frac{2N}{N^{\left(1 + \frac{\eta}{2+\eta}\right)^2}}.
		\end{align}
		From Lemma~\ref{sub:lem:informative} we have 
		\begin{align}\label{sub:eq:thm2-e3}
		\mathbb{P} \left[\mathcal{E}^{\complement}_3 \right] \leq \frac{2n}{N^{t^2/2}} + \sum_{k=1}^K n_k \left( \frac{N_k-d_{max}}{d_{max}(N_k+1)(N_k^{\rho}-1)^2} + 2(N_k-1) e^{-{\epsilon_2^2}} \right) .
		\end{align}
		From our assumption we have 
		\begin{align}\label{sub:eq:thm2-e4}
		\mathbb{P} \left[\mathcal{E}^{\complement}_4 \right] \leq p_s.
		\end{align}
		Combing \eqref{sub:eq:thm2-e2}, \eqref{sub:eq:thm2-e3} and \eqref{sub:eq:thm2-e4} we know
		\begin{align*}
		\mathbb{P}[\mathcal{E}_1] \geq &  1 - \sum_{\forall (i,j) \in \mathcal{I}} \mathbb{P}\left[d(\mathbf{Y}_{\mathcal{C}_i},\mathbf{Y}_{\mathcal{C}_j}) \leq l \middle| \mathcal{E}_2,\mathcal{E}_3, \mathcal{E}_4 \right] - \mathbb{P} \left[{\mathcal{E}}^\complement_2 \right] - \mathbb{P} \left[{\mathcal{E}}^\complement_3 \right] - \mathbb{P} \left[{\mathcal{E}}^\complement_4\right]  \nonumber \\
		\geq &  1 - 4n(n-1)e^{-\epsilon_1^2} -\frac{2n}{N^{t^2/2}} - \sum_{k=1}^K n_k \left( \frac{N_k-d_{max}}{d_{max}(N_k+1)(N_k^{\rho}-1)^2} +2(N_k-1) e^{-{\epsilon_2^2}} \right) \nonumber \\
		& -\frac{2N}{N^{\left(1 + \frac{\eta}{2+\eta}\right)^2}} -p_s.
		\end{align*}
		This completes the proof.
	\end{proof}
	
	\newpage
	\section{Residual Minimization by Ridge Regression}\label{sub:sec:RMRR}

	In this section we provide the algorithm for classifying the out-of-sample points.
	
	\IncMargin{1em}
	\begin{algorithm}
		\caption{Residual Minimization by Ridge Regression (RMRR) algorithm.}\label{sub:algo:REM}
		\SetKwData{Left}{left}\SetKwData{This}{this}\SetKwData{Up}{up}
		\SetKwFunction{Union}{Union}\SetKwFunction{FindCompress}{FindCompress}
		\SetKwInOut{Input}{input}\SetKwInOut{Output}{output}
		\Input{$\mathbf{Y}$ to be classified, $\mathbf{R}$ and $\boldsymbol{\ell}$  are the training data and labels, $m$ and $\lambda$ are the residual minimization and regularization parameters}
		\Output{The label vector $\boldsymbol{\ell}$ of all points in $\mathbf{Y}$}
		
		1. Generate subsets of training data\\
		\For  {$k=1$ \KwTo $K$}   {
			Uniformly sample $m$ points from the $k$-th cluster in the training set $\mathbf{R}$, denote this sampled set as $\mathbf{R}_{k}$;
		}
		
		2. Compute the projection matrix for each cluster \\
		\For  {$k=1$ \KwTo $K$}   {
			$\mathbf{P}_k := \mathbf{R}_k (\mathbf{R}_k^T \mathbf{R}_k + \lambda \mathbf{I})^{-1} \mathbf{R}_k^T$
		}
		
		3.  Compute residuals for points in $Y$, here $N$ is the number of points in $\mathbf{Y}$  \\
		\For  {$i=1$ \KwTo $N$}   {
			\For  {$k=1$ \KwTo $K$}{
				$\mathbf{r}_i(k):=(\mathbf{I}-\mathbf{P}_k)\mathbf{y}_i$;
			}
		}
		
		4. Assign labels through minimum residual\\
		\For  {$i=1$ \KwTo $N$}   {
			$\ell_i = \arg \min_k  \mathbf{r}_i(k) $;
		}
	\end{algorithm}

	\section{Additional Numerical Results}\label{sub:sec:more-numerical}
	In this section, we present  additional numerical results. Results for some algorithms are omitted for certain datasets due to the limitations on computational resources. Specifically, the additional results are presented in Table~\ref{tab:additional-yale}, Table~\ref{tab:additional-zipcode} and Table~\ref{tab:additional-mnist}.
	
	\begin{table}
		\centering
		\begin{tabular}{|c|c|c|c|c|}
			\hline
			\textbf{Method}    & \textbf{Accuracy (\%)}         & \textbf{Accuracy-Sub (\%)}                                     & \textbf{NMI (\%)}                                                  & \textbf{Runtime (sec.)} \\ \hline
			SBSC-SSC  & \begin{tabular}[c]{@{}c@{}}20.99  \\ (1.02)\end{tabular} & \begin{tabular}[c]{@{}c@{}} 22.5\\ (1.29)\end{tabular}   & \begin{tabular}[c]{@{}c@{}} 34.24 
				\\ (1.15)\end{tabular} & 56    \\ \hline
			
			SSSC  & \begin{tabular}[c]{@{}c@{}}49.33\\ (2.51)\end{tabular} & \begin{tabular}[c]{@{}c@{}} 56.54 \\ (1,77)\end{tabular}   & \begin{tabular}[c]{@{}c@{}} 52.82\\ (2.21)\end{tabular} & 22 \\ \hline
			LRR  & 55.63 &  NA  & 64.02 & 29 \\ \hline
			
			LSR  & 54.11 &  NA  & 65.12 & 8 \\ \hline
		\end{tabular}
		\caption{\textbf{Additional Results on Extended Yale B}}
		\label{tab:additional-yale}
	\end{table}
	
	\begin{table}
		\centering
		\begin{tabular}{|c|c|c|c|c|}
			\hline
			\textbf{Method}    & \textbf{Accuracy (\%)}         & \textbf{Accuracy-Sub (\%)}                                     & \textbf{NMI (\%)}                                                  & \textbf{Runtime (sec.)} \\ \hline
			SBSC(6)  & \begin{tabular}[c]{@{}c@{}}75.16  \\ (3.28)\end{tabular} & \begin{tabular}[c]{@{}c@{}} 72.62\\ (1.52)\end{tabular}   & \begin{tabular}[c]{@{}c@{}} 78.79 \\ (1.67)\end{tabular} & 63  \\ \hline
			
			SBSC-DSC(6)  & \begin{tabular}[c]{@{}c@{}}64.25 \\ (1.25)\end{tabular} & \begin{tabular}[c]{@{}c@{}} 65.34\\ (1.86)\end{tabular}   & \begin{tabular}[c]{@{}c@{}} 72.34 \\ (0.76)\end{tabular} & 388  \\ \hline
			
			SBSC-SSC(1)  & \begin{tabular}[c]{@{}c@{}}55.24 \\ (1.34)\end{tabular} & \begin{tabular}[c]{@{}c@{}} 61.44\\ (2.36)\end{tabular}   & \begin{tabular}[c]{@{}c@{}} 45.18 \\ (1.42)\end{tabular} & 117  \\ \hline
			
			SBSC-SSC(6)  & \begin{tabular}[c]{@{}c@{}}71.07  \\ (0.94)\end{tabular} & \begin{tabular}[c]{@{}c@{}} 68.65 \\ (1.21)\end{tabular}   & \begin{tabular}[c]{@{}c@{}} 67.39 \\ (1.19)\end{tabular} & 703  \\ \hline
			
			STSC(6)  & \begin{tabular}[c]{@{}c@{}}57.76
				\\ (1.15)\end{tabular} & \begin{tabular}[c]{@{}c@{}} 60.1\\ (1.8)\end{tabular}   & \begin{tabular}[c]{@{}c@{}} 60.4 \\ (1.59)\end{tabular} & 13\\ \hline
			
			SDSC(6)  & \begin{tabular}[c]{@{}c@{}}52 
				\\ (2.63)\end{tabular} & \begin{tabular}[c]{@{}c@{}} 51.59\\ (1.28)\end{tabular}   & \begin{tabular}[c]{@{}c@{}} 63.28\\ (1.26)\end{tabular} & 51
			\\ \hline
			
			SSSC(1)  & \begin{tabular}[c]{@{}c@{}}41.52 \\ (5.92)\end{tabular} & \begin{tabular}[c]{@{}c@{}} 44.86\\ (7.06)\end{tabular}   & \begin{tabular}[c]{@{}c@{}} 38.22\\ (3.7)\end{tabular} & 25\\ \hline
			
			SSSC(6)  & \begin{tabular}[c]{@{}c@{}}44.43
				\\ (4)\end{tabular} & \begin{tabular}[c]{@{}c@{}} 44.06\\ (2.53)\end{tabular}   & \begin{tabular}[c]{@{}c@{}} 42.61 \\ (2.23)\end{tabular} & 150\\ \hline  
			
			SLRR(6)  & \begin{tabular}[c]{@{}c@{}}63.7 \\ (3.74)\end{tabular} & \begin{tabular}[c]{@{}c@{}} 63.85\\ (1.74)\end{tabular}   & \begin{tabular}[c]{@{}c@{}} 69.25\\ (1.86)\end{tabular} & 46  \\ \hline
			
			SLSR(6)  & \begin{tabular}[c]{@{}c@{}}60.71	\\ (1.04)\end{tabular} & \begin{tabular}[c]{@{}c@{}} 59.43 \\ (0.8)\end{tabular}   & \begin{tabular}[c]{@{}c@{}} 66.39\\ (1.08)\end{tabular} & 26 \\ \hline
			LRR  & 53.25 &  NA  & 53.53 & 401 \\ \hline
			
			LSR  & 58.91 &  NA  & 61.56 & 192 \\ \hline
		\end{tabular}
		\caption{\textbf{Additional Results on Zipcode}}
		\label{tab:additional-zipcode}
	\end{table}
	
	\begin{table}
		\centering
		\begin{tabular}{|c|c|c|c|c|}
			\hline
			\textbf{Method}    & \textbf{Accuracy (\%)}         & \textbf{Accuracy-Sub (\%)}                                     & \textbf{NMI (\%)}                                                  & \textbf{Runtime (sec.)} \\ \hline
			SBSC-SSC & \begin{tabular}[c]{@{}c@{}} 84.95\\ (4.51) \end{tabular} & \begin{tabular}[c]{@{}c@{}} 86.48\\(4.2)\end{tabular}   & \begin{tabular}[c]{@{}c@{}} 73.71 \\ (2.06)\end{tabular} & 834
			\\ \hline
			
			SSSC(1)  & \begin{tabular}[c]{@{}c@{}}33.26\\ (2.15)\end{tabular} & \begin{tabular}[c]{@{}c@{}} 77.22\\ (3.9)\end{tabular}   & \begin{tabular}[c]{@{}c@{}} 13.59\\ (1.41)\end{tabular} & 43 \\ \hline
			
			SSSC(6)  & \begin{tabular}[c]{@{}c@{}}48.49 \\ (2.75)\end{tabular} & \begin{tabular}[c]{@{}c@{}} 79.06\\ (1.63)\end{tabular}   & \begin{tabular}[c]{@{}c@{}} 30.41 \\ (2.04)\end{tabular} & 259\\ \hline
		\end{tabular}
		\caption{\textbf{Additional Results on MNIST}}
		\label{tab:additional-mnist}
	\end{table}
	
	\section{Additional Technical Discussion}\label{sub:sec:more-technical}
	\subsection{The $\epsilon_1$ in Theorem~\ref{sub:thm:sub-preserve-noisy}}
	In this section, we will show that under mild conditions, $\epsilon_1$ in \eqref{sub:eq:thm1-epsilon} grows at least linear in $\sqrt{d}$. For ease of notation, we write $r_{i} = \left(g_{1}^2-\lambda_i^{(1)2}\right)_+$ and $s_i =   \left(g_{1}^2-\lambda_i^{(1)2}\right)_-$, $i=1,..,d$. WLOG assume $\epsilon_1$ is evaluated at $k=1$.
	
	\noindent
	\textbf{Main result}: If there exist constants $c_1 \in (0,g_1]$, $c_2 \in (0, 1)$ and $c_3 > 0$ such that $\sum_{i=1}^d r_i \geq c_1 d$, $\frac{\sum_{i=1}^d s_i}{\sum_{i=1}^d r_i} \leq c_2$ and $c_3 d > g_1$, then we have 
	\[
	\epsilon_1 \geq \frac{(1-c_2)\sqrt{d}}{2\sqrt{\frac{(c_1 + c_3)g_1}{c_1^2}} + \sqrt{\frac{4(c_1 + c_3)g_1}{c_1^2}+ \frac{2}{c_1}}}.
	\]
	\begin{proof}
		Note that 
		\begin{align}\label{sub:eq:appendixD-1}
		\epsilon_1 = \frac{1 - \frac{\sum_{i=1}^d s_i}{\sum_{i=1}^d r_i}}{2\sqrt{\frac{\sum_{i=1}^d r_i^2}{(\sum_{i=1}^d r_i)^2}} + \sqrt{\frac{4\sum_{i=1}^d r_i^2}{(\sum_{i=1}^d r_i)^2} + \frac{2}{\sum_{i=1}^d r_i}}}.
		\end{align}
		Define $f: \mathcal{V} \rightarrow R$, where $f(\mathbf{v}) = \frac{\sum_{i=1}^d v_i^2}{(\sum_{i=1}^d v_i)^2}$, and $\mathcal{V} = \lbrace \mathbf{v} \in [0, g_1]^d: \sum_{i=1}^d v_i = \sum_{i=1}^d r_i \rbrace$. Consider the following $\mathbf{r}^* \in \mathcal{V}$
		\[
		r^*_i =
		\begin{cases*}
		g_1, & if $i \leq \floor{\frac{\sum_{i=1}^d r_i}{g_1}}$,\\
		\sum_{i=1}^d r_i - \floor{\frac{\sum_{i=1}^d r_i}{g_1}} \cdot g_1,        &  $i = \floor{\frac{\sum_{i=1}^d r_i}{g_1}} + 1$, \\
		0, & otherwise.
		\end{cases*}
		\]
		We will prove by contradiction that any maximizer of $f(\cdot)$ is a permutation of $\mathbf{r}^*$.
		
		In fact, assume $\mathbf{r}' \in \mathcal{V}$ also maximizes $f(\cdot)$ but is not a permutation of $\mathbf{r}^*$. Assume there are $m$ terms in $\lbrace r'_i \rbrace_{i=1}^d$ that are equal to $g_1$. Let $r'_1 \leq r'_2$ be the two smallest positive terms of $\lbrace r'_i \rbrace_{i=1}^d$. It is straightforward to see $r'_2 < g_1$. Consequently, we can find a constant $\delta >0$ such that $r'_1-\delta, r'_2+\delta \in (0,g_1)$. Note $\mathbf{r}''=\left[r'_1 - \delta, r'_2 + \delta, r'_3,...,r'_d\right] \in \mathcal{V}$, but $f(\mathbf{r}'') > f(\mathbf{r}')$, which is a contradiction.
		
		Note that $\mathbf{r} \in \mathcal{V}$, we plug $\mathbf{r}^*$ into $f(\cdot)$ and get 
		\begin{align*}
		f(\mathbf{r}) = \frac{\sum_{i=1}^d r_i^2}{(\sum_{i=1}^d r_i)^2} \leq \frac{(\frac{\sum_{i=1}^d r_i}{g_1} + 1)g_1^2}{(\sum_{i=1}^d r_i)^2} \leq \frac{(\frac{c_1 d}{g_1} + 1)g_1^2}{(c_1 d)^2} \leq \frac{(c_1 + c_3)g_1}{c_1^2d}.
		\end{align*}
		
		Finally, from the inequality above and \eqref{sub:eq:appendixD-1} we have
		\begin{align*}
		\epsilon_1 \geq \frac{1 - c_2}{2\sqrt{\frac{(c_1 + c_3)g_1}{c_1^2d}} + \sqrt{\frac{4(c_1 + c_3)g_1}{c_1^2d}+ \frac{2}{c_1 d}}} = \frac{(1-c_2)\sqrt{d}}{2\sqrt{\frac{(c_1 + c_3)g_1}{c_1^2}} + \sqrt{\frac{4(c_1 + c_3)g_1}{c_1^2}+ \frac{2}{c_1}}}.
		\end{align*}
	\end{proof}
	
	\newpage
	\bibliography{ref_subspace}

\end{document}